\documentclass{article}
\usepackage{conference,times}

\usepackage{amsmath,amsfonts,bm}

\def\eqref#1{equation~\ref{#1}}

\def\1{\bm{1}}

\DeclareMathAlphabet{\mathsfit}{\encodingdefault}{\sfdefault}{m}{sl}
\SetMathAlphabet{\mathsfit}{bold}{\encodingdefault}{\sfdefault}{bx}{n}

\usepackage[utf8]{inputenc} 
\usepackage[T1]{fontenc}    
\usepackage{hyperref}       
\usepackage{url}            
\usepackage{booktabs}       
\usepackage{amsfonts}       
\usepackage{nicefrac}       
\usepackage{microtype}      
\usepackage{xcolor}         
\usepackage{wrapfig} 
\usepackage{tikz}
\usetikzlibrary{positioning}
\usepackage{xspace}
\usepackage{amsthm}
\usepackage{subcaption}
\usepackage{amsmath}
\usepackage{graphicx}
\usepackage{wrapfig}
\usepackage{tabularx}
\usepackage[table]{xcolor}
\usepackage[dvipsnames]{xcolor}
\newtheorem{definition}{Definition}
\newtheorem{theorem}{Theorem}
\newtheorem{lemma}[definition]{Lemma}

\newtheorem{remark}[definition]{Remark}
\usepackage{tikz}
\usepackage{enumitem}
\usetikzlibrary{positioning, shapes.geometric, arrows.meta, fit, calc, backgrounds}
\usepackage{multirow}
\usepackage{times}  
\usepackage[normalem]{ulem}
\usepackage{xcolor}
\usepackage{pgfplots}
\usepackage{amssymb}
\usepackage{caption}

\definecolor{round1}{HTML}{f0e442}
\definecolor{round2}{HTML}{56B4E9}
\definecolor{round3}{HTML}{E69F00}

\definecolor{okb_lightorange}{HTML}{E69F00}
\definecolor{okb_lightblue}{HTML}{56B4E9}
\definecolor{okb_green}{HTML}{009E73}
\definecolor{okb_yellow}{HTML}{f0e442}
\definecolor{okb_darkblue}{HTML}{0072b2}
\definecolor{okb_orange}{HTML}{d55e00}
\definecolor{okb_magenta}{HTML}{cc79a7}

\newcommand*\circled[1]{\tikz[baseline=(char.base)]{
\node[shape=circle,draw,inner sep=1pt] (char) {#1};}}

\newsavebox{\pirshieldbox}
\newsavebox{\dbiconbox}

\sbox{\pirshieldbox}{
  \tikz[baseline=-0.6ex, scale=0.15]{
    \draw[fill=purple!70!black, draw=purple!70!black, line join=round] (0,1) -- (1,1) .. controls (1,0) and (0.5,-1) .. (0,-1.5) .. controls (-0.5,-1) and (-1,0) .. (-1,1) -- cycle;
    \draw[fill=white, draw=white, line join=round] (0,0.5) -- (0.5,0.5) .. controls (0.5,0) and (0.25,-0.5) .. (0,-0.75) .. controls (-0.25,-0.5) and (-0.5,0) .. (-0.5,0.5) -- cycle;
  }
}

\sbox{\dbiconbox}{
  \tikz[baseline=-0.5ex, scale=0.2]{
    \draw[fill=white, thick] (0, 1.5) ellipse (1 and 0.4);
    \draw[fill=white, thick] (-1, 0) -- (-1, 1.5) arc (180:360:1 and 0.4) -- (1, 0) arc (360:180:1 and 0.4);
    \draw[fill=white, thick] (-1, -1.5) -- (-1, 0) arc (180:360:1 and 0.4) -- (1, -1.5) arc (360:180:1 and 0.4);
    \draw[thick] (-1, 1.5) arc (180:360:1 and 0.4);
    \draw[thick] (-1, 0) arc (180:360:1 and 0.4);
  }
}

\newcommand{\bin}{\texttt{PILLAR-Bin}\xspace}
\newcommand{\tree}{\texttt{PILLAR-Tree}\xspace}

\newcommand{\iscore}{\textsf{score}\xspace}

\newcommand{\doc}{D}

\newcommand{\corpus}{\mathcal{C}}
\newcommand{\answer}{\mathcal{D}^k}

\newcommand{\vocab}{\mathcal{V}}

\newcommand{\client}{\texttt{clnt}\xspace}
\newcommand{\server}{\texttt{srv}\xspace}
\newcommand{\query}{Q}
\newcommand{\embdim}{d}
\newcommand{\analyze}{\mathsf{Analyze}}
\newcommand{\fpprag}{\mathsf{PPRAG}}
\newcommand{\emb}{\mathsf{Emb}}
\newcommand{\proto}{\textsf{PILLAR}\xspace}

\usetikzlibrary{arrows.meta, shapes, positioning, calc, backgrounds}
\usetikzlibrary{positioning, arrows.meta, shapes.geometric, shapes.symbols,
  fit, calc, decorations.pathreplacing, backgrounds,
  shadows.blur, patterns, fadings}
\usepgfplotslibrary{groupplots}

\definecolor{clientBlue}{HTML}{4A90D9}
\definecolor{agentOrange}{HTML}{E07B30}
\definecolor{llmPurple}{HTML}{7B61B0}
\definecolor{toolGreen}{HTML}{3A9D6A}
\definecolor{docYellow}{HTML}{D4A843}
\definecolor{lockRed}{HTML}{C0392B}
\definecolor{textDark}{HTML}{2C3E50}
\definecolor{arrowDark}{HTML}{444444}
\definecolor{loopRed}{HTML}{C0392B}
\definecolor{privacyGreen}{HTML}{27AE60}
\definecolor{localBlue}{HTML}{3B7DD8}

\newcommand{\database}{\textsf{DB}}
\newcommand{\ind}{\textsf{id}}
\newcommand{\macroname}{\texttt{PILLAR}\xspace}
\newcommand{\pcost}{\texttt{PIRCost}\xspace}

\title{PILLAR: Private Inverted-index Lexical Lookup for Augmented Retrieval}

\newcommand{\authw}[1]{\makebox[0.5\textwidth][l]{#1}}

\author{Truong Son Nguyen \thanks{Equal contribution} \\
\authw{Department of Computer Science} \\
Arizona State University \\
Tempe, AZ 85281, USA \\
\texttt{snguye63@asu.edu} \\
\And
Daniel Blackley \footnotemark[1]  \\
\authw{Department of Computer Science} \\
George Mason University \\
Fairfax, VA 22030, USA \\
\texttt{dblackle@gmu.edu} \\
\AND
Ni Trieu \\
\authw{Department of Computer Science} \\
Arizona State University \\
Tempe, AZ 85281, USA \\
\texttt{nitrieu@asu.edu} \\
\And
Evgenios M. Kornaropoulos \\
\authw{Department of Computer Science} \\
George Mason University \\
Fairfax, VA 22030, USA \\
\texttt{evgenios@gmu.edu} \\
}

\begin{document}

\maketitle

\begin{abstract}
Retrieval-augmented generation (RAG) hands the user's query to whoever hosts the corpus. We propose \texttt{PILLAR}, a Privacy-Preserving RAG (PPRAG) system based on Private Information Retrieval (PIR) in which a client utilizes the $k$ documents
most similar to their query from a server-held and publicly known corpus to respond to their query, while the server learns nothing about the query, either its terms or its access pattern. Prior PPRAG constructions rely on dense
retrieval alone, translating approximate nearest-neighbor search into many query-dependent rounds of PIR, and pay for it in both latency and retrieval quality. \texttt{PILLAR} instead performs \emph{private hybrid retrieval} in two stages. A sparse stage issues a small, fixed number of PIR queries against a carefully designed index of precomputed BM25 scores, filtering the corpus down to candidates that share terms with the query without the server ever seeing which terms these are. A dense stage then fetches only those
candidates' \emph{document embeddings} and re-ranks them locally, avoiding the many costly PIR queries that private dense retrieval typically requires. We instantiate \texttt{PILLAR} with two protocols that trade latency against retrieval
quality, each built on a different private rendering of lexical search. \texttt{PILLAR-Bin} bins posting lists into a hash table and is a single-round design that achieves lower latency than state-of-the-art private retrieval schemes. \texttt{PILLAR-Tree} turns block-max pruning into an oblivious tree traversal combined with cuckoo hash tables and
achieves the highest retrieval quality at lower latency than state-of-the-art schemes. 

\end{abstract}

\section{Introduction}

Retrieval-augmented generation (RAG) is now the standard way to ground a Large Language Model (LLM) in knowledge that is not in its parameters~\citep{lewis2020rag,gao2024ragsurvey}, and it is the mechanism major search engines rely on for AI summaries.
A RAG pipeline consists of a client that issues a query, a server that hosts a document corpus, and an LLM that generates an answer to the client's query by retrieving documents from the server to augment its generated response.
This poses a severe privacy risk, the query is handed, verbatim, to the server. 
Exposing a client's query carries real risk; search engine logs can trivially be used to identify clients~\citep{barbaro2006aol}, clients disclose sensitive personal details to LLMs~\citep{mireshghallah2024trustnobot}, and current models infer identifiable information from very little text~\citep{staab2024beyond}.

\textbf{Privacy-Preserving RAG.}
A privacy-preserving RAG (PPRAG) protocol answers a client's query over a public corpus such that the server hosting the corpus learns nothing about the query. 
Due to the public nature of the corpus, such protocols are typically built on Private Information Retrieval (PIR)~\citep{chor1995pir}, a cryptographic primitive that lets a client fetch a non-encrypted item from a server without revealing which item was fetched.
Existing PPRAG work falls into two categories, \emph{dense} and \emph{sparse} retrieval, with the majority focusing on the former.
\begin{wrapfigure}[21]{r}{0.35\linewidth}
    \centering
    \includegraphics[width=\linewidth]{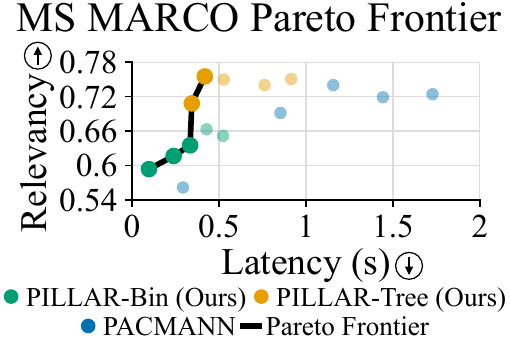}
    \caption{Pareto between  
    a state-of-the-art private dense retrieval method PACMANN, and
    our private hybrid retrieval methods (\bin \& \tree). In MS MARCO, our \tree (in orange) dominates across all quality metrics, while our \bin (in green) dominates in speed/latency.}
    \label{fig:best_grid}
\end{wrapfigure}
Among dense retrieval protocols, Tiptoe~\citep{tiptoe} privately searches over clustered embeddings, Compass~\citep{compass} traverses an HNSW graph using Oblivious RAM (ORAM), and PACMANN~\citep{pacmann}, the current state-of-the-art (SOTA), has the client traverse an approximate nearest neighbor (ANN) graph.
Sparse retrieval, by contrast, is represented by Coeus~\citep{coeus}, which scores the query against a term-frequency matrix. 

\textbf{Limitations of Existing PPRAG.}
Existing PPRAG protocols face two limitations: First, ANN traversal is adaptive; each hop depends on the query, and the walk terminates once no closer neighbors are found.
Hiding the access pattern therefore requires a fixed, query-independent schedule. 
On MS MARCO, PACMANN requires $20$ round trips per query yet achieves only $0.266$ MRR@10, compared to roughly $0.31$ for a non-private ANN baseline over identical embeddings~\cite{pacmann}. 
Second, existing protocols support only a single retrieval architecture, whereas \emph{real RAG deployments} commonly use hybrid architectures, with lexical scoring followed by comparing learned embedding similarities (See Appendix~\ref{app:hybrid_retrieval} for more details). 
The two architectures fail in complementary ways: learned embeddings encode word meanings using their training data but fail when a query uses the same word in a different context, while lexical scoring does not consider semantically relevant words~\citep{ma2021hybrid,thakur2021beir,bruch2023fusion}. 
Used together, the methods mitigate each other's weaknesses.
This points to a gap in PPRAG literature:

\begin{quote}
\emph{No PPRAG uses both sparse and dense retrieval architectures, and the single-architecture alternatives pay for it in number of interactions and in answer quality.}
\end{quote}

\textbf{Contributions.} 
We present $\proto$, a new \emph{hybrid-retrieval}\footnote{In standard literature, a ``hybrid-retrieval RAG protocol'' refers to a protocol that combines the scores from both dense and sparse retrieval; here hybrid-retrieval refers strictly to the retrieval methods, not the final score} PPRAG architecture that retains the strengths while limiting the weaknesses of previous dense or sparse PPRAG protocols.
\proto achieves a "best-of-both-worlds" design, and we demonstrate its effectiveness with two new PPRAG protocols: \bin and \tree.
\bin uses lexical scoring functions to resolve a query in a \emph{single} round, achieving low latency.
\tree adapts block-max BM25~\cite{broder2003wand, ding2011bmw}  to perform multiple rounds of increasingly accurate retrievals, achieving unmatched quality compared to SOTA.
We evaluate both the quality of the answer and latency for \bin, \tree, and the SOTA PACMANN, and provide an extensive evaluation in Section~\ref{sec:evaluation} and Appendix~\ref{app:tree-experiment}.  
An indicative result is in Figure~\ref{fig:best_grid}, which shows the  \emph{Pareto frontier} over the Top-5 configurations for each method. Specifically, it captures the trade-offs between the latency required to retrieve documents for a query (X-axis) and the Relevancy of an LLM-generated answer to that query, augmented with those documents (Y-axis).
The \emph{only configurations} on the frontier belong to either \bin or \tree, never once is PACMANN better in either latency or Relevancy.
In summary, our core contributions are as follows:
\begin{itemize}[leftmargin=*]
    \item \textbf{Hybrid Retrieval Under PIR.} We give the first PPRAG protocols that combine lexical and semantic scoring, and prove that both \bin and \tree hide a client's query from a semi-honest server in the PIR-hybrid model.
    \item \textbf{Comprehensive Evaluation.} We benchmark both protocols against the SOTA, PACMANN, and plot the best configurations of all three protocols using a \emph{Pareto frontier} in Section~\ref{sec:comparison}, finding that \proto accounts for $93\%$ of the configurations on the frontier.
    We test on two standard RAG datasets, MS MARCO~\cite{bajaj2018msmarco} and SciFact~\cite{wadden-etal-2020-fact_scifact}, reporting retrieval quality, communication cost, and latency. All code for \bin is available at \url{https://github.com/dkblackley/bi...}
     Unlike PACMANN, which evaluates retrieval alone, we also evaluate the end-to-end answers generated from the retrieved documents, using the Retrieval Augmented Generation Assessment metrics \emph{faithfulness} and \emph{answer relevancy}~\cite{es2025ragasautomatedevaluationretrieval}.
    \item \textbf{A Low-Latency PPRAG Protocol.} The fastest PACMANN configuration needs $0.293$\,s, while \bin retrieves documents in as little as $0.098$\,s on MS MARCO, with comparable quality.
    \item \textbf{A High-Quality PPRAG Protocol.} The highest quality documents across the best configurations are retrieved by \tree, with answer Relevancy $0.75$. The PACMANN configuration that finds the best documents achieves $0.72$ answer relevancy but requires over $2\times$ more latency.
\end{itemize}

\section{Background \& Preliminaries}
\label{sec:background_notation}
\textbf{\textbf{Notation.}} A \emph{corpus} $\corpus = \{D_1,\dots,D_n\}$ is an ordered set of $n$ documents.
The index $i$ of a document  $D_i \in \corpus$ is also referred to as a document's \emph{identifier}.
An \emph{analyzer} $\analyze$ maps raw text to a finite sequence of tokens called \emph{terms}. 
The \emph{vocabulary} $\vocab = \bigcup_{D \in \corpus} \analyze(D)$ is the set of all unique terms in a corpus. 
An \emph{embedding function} $\emb$ maps raw text to a $d$-dimensional dense vector.  
With the term \emph{document embedding} $e_D =\emb(D) \in \mathbb{R}^{\embdim}$, we refer to a document's dense vector representation.
A \emph{query} $\query$ is raw text supplied by a client, so that $\analyze(\query) = (w_1,\dots, w_{|Q|})$ are its \emph{query terms} and $e_\query = \emb(\query) \in \mathbb{R}^{\embdim}$ its \emph{query embedding}.
We sometimes abuse the notation $\query$ to denote both a sequence of terms and the index of documents to be returned through PIR.
We make this distinction clear by referring to the latter case as the \emph{PIR query}.

\textbf{Private Information Retrieval (PIR).} There are multiple variants of PIR~\cite{chor1995pir, 1997_kushilevitz_replication_not_needed_single_pir, Amos_2000_preprocessing_PIR_Single} but we focus on Single-server PIR with preprocessing designs~\cite{zhou2024PIANO, Gibbs_2020_Optimal_PIR_Preprocessing, Amos_2000_preprocessing_PIR_Single}. 
These PIR protocols involve interactions between a stateful client $\client$ and a server $\server$, consisting of a ($i$) preprocessing phase followed by a ($ii$) query phase.  
In the PIR preprocessing phase, $\server$'s input is a publicly known $\corpus$, which it sends to $\client$, who initializes a private state.
During the query phase, $\client$'s input is an index $0 \le i \le |\corpus|$ (w.r.t. to $\corpus$) and constructs a PIR query $\query$ to retrieve the requested document $D_i \in \corpus$. $\client$ sends $\query$ to $\server$ and $\client$ uses the responses from $\server$ and his internal state to output $D_i$.
A PIR protocol is \emph{secure} if $\server$ does not discover anything about the client's input for the requested index $i$.
Consistent with prior work~\cite{tiptoe, pacmann}, we consider a semi-honest $\server$, where $\server$ follows the protocol but tries to recover $i$.

\textbf{Privacy-Preserving Retrieval Augmented Generation ($\fpprag$) Functionality.} 
A standard PIR protocol is incompatible with PPRAG because a $\client$ takes query terms $\query$ as input rather than an index $i$, and outputs the documents most similar to $\query$ rather than the single document at index $i$, where similarity is measured by a function $\iscore(\cdot,\cdot)$ mapping a (query, document) pair to a score under some metric. 
For the cryptographic formalism, we consider the ideal two-party $\fpprag$ functionality, where the server and client both have input, but only the client has output, defined by $\fpprag(\corpus, \query) \;=\; \bigl(\,\bot,  \answer \bigr)$. In words: $\server$ takes in a public corpus $\corpus$ and outputs nothing (denoted $\bot$),  $\client$ takes as input a private $\query$ and outputs $k$ documents $\answer \subseteq \corpus$ such that, for every $D \in \answer$, and every $D' \in \corpus \setminus \answer$ we have that: 
$ \iscore(\query, D) \;\ge\; \iscore(\query, D')$.
The new protocol $\proto$ in this work realizes this ideal functionality of $\fpprag$.

\section{Challenges of PIR under RAG Architecture}

This section discusses sparse and dense retrieval architectures for RAG with a public corpus, with particular attention to the challenges that arise when scaling each to a PIR-based \emph{private analog}.

\label{sec:pir_challenges_sec_3}

\subsection{Sparse Retrieval and Privacy.} 
\label{sec:sparse_retrieval_and_privacy}

At a high level, \emph{sparse retrieval} scores each document (via lexical scoring functions) by the query terms appearing in it verbatim, weighting rare terms more heavily than common ones, and returns the $k$ highest-scoring documents.
Some well-known lexical scoring functions are TF-IDF~\cite{robertson2009bm25}, Okapi BM25~\cite{robertson2009bm25}, and SPLADE ~\cite{splade}.
Specifically, each score computed for a term $w \in \vocab$ in the document $D$ is a function of the frequency of $w$ in $D$.
To facilitate this, each document is modeled as a vector $v$ over $\vocab$ whose entry for a term is nonzero only if the document contains that $\vocab$ term (hence \emph{sparse}, since a document uses only a small fraction of the vocabulary). 
Scoring a \emph{single} $D$ against $\query$ thus amounts to looking up $v_D$'s entries at the coordinates named by the query's terms and summing them, i.e., a small set of index lookups on vector $v_D$, where the indices are exactly the sensitive input/query. 
The following exposition describes techniques without privacy in mind (thus, the indices are visible to the server).

\noindent\textbf{Inverted Indexes.} 
An end-to-end PPRAG query under sparse retrieval requires $\server$ to score $\query$ against \emph{every} $\doc \in \corpus$ and return only the $k$ highest-lexical-scoring documents.
Computing the lexical score for every document is wasteful since many documents typically won't contain any terms in the query and will always score $0$.
A common way to filter out these documents is to organize documents using an abstract data type called an \emph{inverted index}. 
Inverted indexes are dictionaries that map each term $w \in \vocab$ to the set of documents in which $w$ occurs, called the \emph{posting list}. 
Retrieval protocols built on an inverted index avoid this exhaustive scan: 
$\server$ uses the inverted index to fetch the posting list for each $w \in \query$ and scores $\query$ against \emph{only the relevant documents} that reside in the posting lists. 
We discuss two methods for instantiating an inverted index in this context: bin-based and block-based.

\noindent\textbf{Bin-based BM25.} An inverted index can be instantiated as a \emph{hash table}, where each hash table bin stores the posting list for each hashed term.
We refer to this simple approach as \emph{bin-based BM25.}
At query time, $\server$ hashes each $w \in \query$ to a bin, fetches the posting lists stored there, scores the documents they contain, and returns the $k$ highest-scoring ones to $\client$. 
The \textbf{challenges} of applying PIR on top of a bin-based BM25 are: 
\circled{1} In non-privacy-preserving bin-based BM25, the \client receives the top-$k$ documents that \server locally filtered across posting lists; however, in a PPRAG rendition, the server does not see $\query$, thus, it cannot perform this task. 
\circled{2} In non-privacy-preserving bin-based BM25, to minimize collisions we need a large hash table; however, PIR computation complexity scales superlinearly with the size of the hash table. %
This introduces a trade-off: a large hash table yields fewer posting lists (i.e., less filtering) but increases PIR complexity, while a small hash table yields more posting lists (i.e., more filtering) with faster PIR complexity.

\noindent\textbf{Block-based BM25~\cite{ding2011bmw}.}
In a typical RAG deployment, $\client$ sends $\query$ to $\server$ and receives only the $k$ most similar documents in $\corpus$ w.r.t.\ $\query$.
To speed up this search, $\server$ can \emph{efficiently dismiss} documents that cannot reach the top-$k$.
We now discuss one such strategy built on the inverted index, \emph{Dynamic Pruning}~\cite{TURTLE1995831_MAXSCORE, broder2003wand}.
The underlying intuition is as follows. 
Recall that a document's $D$ score is the sum of $D$'s per-term contributions over the query terms, so for a ``batch'' $\mathcal{B} \subseteq \corpus$ of documents, taking the single largest contribution of each $w \in \query$ \emph{across all} documents in $\mathcal{B}$ and summing\footnote{Note that no document in $\mathcal{B}$ need actually attain this bound, since the per-term maxima may come from different documents, e.g., $D_3$ for $q_1$, say, and $D_9$ for $q_2$.} these maxima provides an upper bound for the exact score of \emph{every} document in $\mathcal{B}$. 
If this over-approximation falls below the current top-$k$ exact score, the entire batch can be skipped without scoring any of its documents individually. 
Dynamic Pruning realizes this idea as follows:
At initialization, $\server$ partitions all documents into \emph{blocks} where each one contains a fixed number of documents. For each block $\mathcal{B}$ and term $w$, $\server$ precomputes $\sigma_w(\mathcal{B})$, the largest score between $w$ and any document in $\mathcal{B}$ (thus, $|\vocab|$ scores per block).
At query time, $\server$ retrieves the blocks for each $w \in \query$ and scores the first $k$ documents arbitrarily to establish a baseline threshold. It then proceeds block by block: since no document in $\mathcal{B}$ can score higher than $\sum_{w \in \query} \sigma_w(\mathcal{B})$, a bound below the $k$th best exact score seen so far rules out the whole block, which is skipped, unscored. Blocks clearing the threshold are scored in full and the running top-$k$ updated. Once all blocks are traversed, $\server$ returns the final top-$k$; we refer to this as \emph{Block-based BM25}.

Applying PIR to this procedure is challenging:
   \circled{1} The algorithm attempts to find the top scoring documents but, given the lack of organization among  blocks, the server needs to linearly scan all blocks to identify the right ones. 
\circled{2} 
In non-private Block-based BM25, $\server$ tracks the $k$th best score seen so far, a threshold that is \emph{query-dependent}. 
Under PPRAG, $\server$ never sees (sensitive)  $\query$ and therefore \emph{cannot compute} this threshold, let alone decide which blocks it prunes.

\subsection{Dense Retrieval and Privacy.}
Whereas sparse retrieval matches on exact term overlap, dense retrieval matches on \emph{meaning}, so a document can rank highly without sharing a single word with the query.
Dense retrieval maps documents and queries into a shared vector space in which semantically similar texts lie close together~\cite{karpukhin-etal-2020-dense-vs-sparse-comparison, murphy2013machine}.
A popular measure of closeness is cosine similarity: given a query embedding $\mathbf{e}_\query \in \mathbb{R}^\embdim$ and a document embedding $\mathbf{e}_\doc \in \mathbb{R}^\embdim$, we compute $\cos(\mathbf{e}_\query, \mathbf{e}_\doc) = (\mathbf{e}_\query \cdot \mathbf{e}_\doc) / (\|\mathbf{e}_\query\| \|\mathbf{e}_\doc\|)$.

\noindent\textbf{Approximate Nearest Neighbor Search.} In a typical dense RAG deployment, $\client$ sends its query embedding $\mathbf{e}_\query$ to $\server$, which computes the cosine similarity against every document embedding and returns the $k$ closest. This is exactly $k$-Nearest Neighbor (NN) search~\cite{1998_Har_peled_ANN_approx_STOC}, at a cost of $O(|\corpus|\embdim)$ per query. The common alternative is \emph{Approximate} Nearest Neighbor (ANN) search~\cite{karpukhin-etal-2020-dense-vs-sparse-comparison, pacmann}, which in one popular form~\cite{pacmann} represents $\corpus$ as a graph\footnote{This converts a high-dimensional geometric search into a combinatorial one, with the geometry encoded into the edge set rather than evaluated at query time.} whose nodes are documents and whose edges join embeddings of high cosine similarity, reducing retrieval to a traversal toward increasingly similar documents rather than a full scan. 
Dense retrieval poses its own challenges under PIR: 
\circled{1} The graph traversal is \emph{query-dependent}: from an arbitrary starting node, $\server$ scores $\mathbf{e}_\query$ against the current node's neighbors and moves to the closest one, so which node is accessed at each step is determined by the query. Revealing this access pattern would \emph{leak information} about $\mathbf{e}_\query$.
\circled{2} The number of traversal steps also varies with the query, since the search halts once it stops finding closer neighbors. This count is itself a function of $\mathbf{e}_\query$, so revealing it would \emph{leak information} about the query. A PPRAG-friendly ANN must therefore fix the step count in advance, trading retrieval accuracy against cost on every query.

\textbf{Properties of a Hybrid Retrieval Protocol.} Sparse and dense retrieval succeed on complementary dimensions. Sparse retrieval pinpoints domain-specific terms that embeddings often fail to represent, while dense retrieval captures semantic relationships that exact lexical matching misses. A hybrid PIR protocol should therefore combine both, inheriting
\circled{A} the low, \emph{fixed round complexity} of sparse search,
\circled{B} sparse retrieval's ability to \emph{quickly dismiss documents} with no lexical overlap with the query, and
\circled{C} the \emph{semantic accuracy} of dense embeddings.

\section{PILLAR: \emph{PIR-Friendly} Hybrid Retrieval}
\label{sec:PILLAR_hybrid_retrieval}

This section presents our protocols $\bin$ and $\tree$, both of which realize $\fpprag$. Their design follows directly from the three properties a hybrid architecture should have, stated at the end of Section~\ref{sec:pir_challenges_sec_3}. Regarding
\circled{A}, every adaptive component is removed. The number of rounds and the size of every message depend only on publicly known parameters, never on $\query$, so the communication pattern is fixed.
For \circled{B}, the crude part of sparse retrieval runs first on the server, and the refined filtering stage on the client. Lexical scoring dismisses the bulk of $\corpus$ cheaply, leaving a small candidate set for the expensive stage.
\circled{C} Semantic re-ranking is moved completely to $\client$. After the sparse stage, $\client$ fetches the candidates' embeddings via PIR and ranks them locally, recovering the accuracy of dense retrieval without $\server$ ever learning $\query$. 
We provide the full protocols for both \bin \& \tree in Appendix \ref{app:protocol_details} and the security analysis in Appendix~\ref{app:security}.

\subsection{A Simple Bin-Based BM25}
\label{sec:bm25bin}

We discuss solutions for the two challenges identified for Bin-based BM25 in Section~\ref{sec:sparse_retrieval_and_privacy}.%

\noindent\textbf{Solving Bin-Based BM25 Challenges.} \circled{1} Recall the first problem for Bin-Based BM25 was that, in non-privacy-preserving Bin-Based BM25, \server would retrieve posting lists for each term $w \in \query$ and score the documents in the posting lists, returning just the top-$k$ highest-scoring documents. 
Scoring documents requires access to $\query$, which \server does not see in a PPRAG protocol.
To solve this, \server will only return a fixed number of $t$ documents per query term to \client, who can locally re-rank without revealing $\query$. 
To ensure the $t$ returned documents are still relevant, we return documents from a posting list that is sorted by frequency of the hashed term $w \in \vocab$ used to retrieve the posting list, and the $t$ highest-frequency documents are returned.
Empirical approaches for choosing $t$ are explored in Section~\ref{sec:evaluation}.
\circled{2} Recall the trade-off: a large hash table keeps collisions rare but drives up PIR complexity, while a small one requires less PIR operations but conflates unrelated terms into the same bin. 
$\bin$ uses only $R \ll |\corpus|$ rows and maps each term $w$ to row $H(w) \pmod R$ under a public hash function $H$. 
Thus, $\client$ can compute which row to request, and PIR hides that index from $\server$. 
$R$ is also smaller than the vocabulary, so many terms share a row, and the postings lists of all colliding terms are merged into it.
Rows are truncated at $t$ length, so, crucially,  the order of merging lists determines which documents survive truncation. 
Naively appending lists would preserve the first list in full and cut off the last list entirely. 
$\bin$ therefore \emph{interleaves} lists in a round-robin fashion, placing the $j$th document of the $i$th list at position $(j-1)m + i$ when $m$ terms collide. Reading the first $tm$ positions then yields the top $t$ documents from each colliding list. 
Finally, the $\server$ omits duplicates when a document is contained in two colliding lists.

\noindent\textbf{Final Design Description.}
At setup, $\server$ computes the embedding $\mathbf{e}_\doc \in \mathbb{R}^\embdim$ of every $D \in \corpus$ and stores embeddings in place of documents, so that $\client$ can re-rank locally once retrieval completes, i.e., the dense retrieval part of the hybrid design. It then builds the hash table as in non-private bin-based BM25 and applies the two designs above, i.e., hashing over $R$ rows and interleaving colliding postings lists round-robin. The result is a PIR database of $R$ rows, each holding exactly $t$ embeddings (after truncation) and hence $t \cdot \embdim$ field elements\footnote{Thus, all rows are indistinguishable in size regardless of how many terms collide in them.}. 
Since a query contains several terms, retrieving one row per term with independent PIR queries would cost $|\query|$ separate protocol executions; we instead use \emph{batched PIR}, which packs multiple indices into a single query at sublinear amortized cost per index~\cite{pacmann}. The batch size is fixed to the padding bound of \circled{3} rather than to $|\query|$, so the query length is identical across executions and leaks nothing about how many terms the client used. 
$\client$ issues one batched PIR query, decodes the $t$ embeddings in each returned row, and aggregates them into a candidate pool $\mathcal{D}_{\mathrm{cand}}$ of at most $t$ times the batch size, discarding duplicates that arise when a document appears in several retrieved rows. 
The entire protocol is a \emph{single round}, and its message sizes depend on $R$, $t$, $\embdim$, and the batch bound.

\subsection{Tree-based BM25 (\tree)}
\label{sec:bm25tree}
This section discusses how to solve the previously identified challenges for block-based BM25  in Section~\ref{sec:sparse_retrieval_and_privacy} and provides the design principle for $\tree$.

\begin{figure}
\resizebox{\textwidth}{!}{
\begin{tikzpicture}[
  treeNode/.style={draw, circle, minimum size=6mm, inner sep=0pt, thick, fill=gray!20},
  treeNodeChoice/.style={treeNode, fill=okb_darkblue},
  selNode/.style={treeNode, fill=okb_lightblue!40},
  leaf/.style={treeNode, minimum size=4mm, fill=gray!40},
  selLeaf/.style={treeNode,minimum size=4mm,  fill=okb_darkblue, thick},
  compLeaf/.style={treeNode,minimum size=4mm,  fill=okb_lightblue!40, thick},
  subRow/.style={draw, rectangle, rounded corners=2pt,
    minimum width=20mm, minimum height=7mm,
    font=\small, align=center, fill=okb_green!15, thick},
  selRow/.style={subRow, fill=okb_green},
  docBox/.style={draw, rectangle, rounded corners=2pt,
    minimum width=16mm, minimum height=9mm,
    font=\small, align=center, fill=okb_orange!15, thick},
  selDoc/.style={docBox, fill=okb_orange!55},
  arrow/.style={-{Latex[length=2.8mm,width=2mm]}, thick},
  redarrow/.style={arrow, red!70},
  bluearrow/.style={arrow, blue!65},
  panelBorder/.style={draw=gray!50, dashed, rounded corners=8pt, fill=gray!3, thick},
]

% ================================================================
%  STAGE 1 — Beam Search Tree    (panel y: 4.6..8.6)
% ================================================================
\begin{scope}[on background layer]
  \fill[panelBorder] (-1.5, 4.2) rectangle (4.5, 8.6);
\end{scope}
\node[font=\small\bfseries] at (1.85, 8.3) {Stage 1: Beam Search (Tree)};

\node[treeNodeChoice] (root) at (1.85, 7.4) {};
\node[font=\small, above=2pt of root] {root};

\node[treeNodeChoice] (na) at (0.5, 6.2) {};
\node[treeNodeChoice] (nb) at (1.85, 6.2) {};
\node[selNode] (nc) at (3.2, 6.2) {};

\node[leaf] (l1) at (-1.0, 4.9) {};
\node[compLeaf] (l2) at (-0.4, 4.9) {};
\node[compLeaf] (l3) at ( 0.2,4.9) {};
\node[compLeaf] (l4) at ( 1.0,4.9) {};
\node[leaf] (l5) at ( 1.6, 4.9) {};
\node[compLeaf] (l6) at ( 2.2, 4.9) {};
\node[leaf] (l7) at ( 3.0,4.9) {};
\node[leaf] (l8) at ( 3.6, 4.9) {};
\node[leaf] (l9) at ( 4.2, 4.9) {};

\node[font=\small, below=3pt of l1] {leaf};

\draw[thick] (root)--(na); \draw[thick] (root)--(nb); \draw[thick] (root)--(nc);
\draw[thick] (na)--(l1); \draw[thick] (na)--(l2); \draw[thick] (na)--(l3); \draw[thick] (nb)--(l4); \draw[thick] (nb)--(l5); \draw[thick] (nb)--(l6);
\draw[thick] (nc)--(l7); \draw[thick] (nc)--(l8); \draw[thick] (nc)--(l9);

\node[selLeaf] (sl1) at (l1) {};
\node[selLeaf] (sl2) at (l5) {};

% \draw[redarrow, bend right=30] (root.south west) to (sl1.north);
% \draw[redarrow, bend left=12]  (root.south)      to (sl2.north);
\draw[redarrow, bend right=30] (root.south west) to (na.north);
\draw[redarrow, bend left=30] (root.south) to (nb.north);
\draw[redarrow, bend left=30] (nb.south east) to (sl2.north east);
\draw[redarrow, bend right=30] (na.south west) to (sl1.north);

% S1 description
\node[font=\small, align=center, text width=55mm] at (1.85, 3.05)
  {
   At each level, fetch children of top $W$ super-blocks via PIR.\\[3pt]
   % Score defined by: $\sigma_Q(\mathcal B)=\sum\limits_{w\in Q}\sigma_w(\mathcal B)$.\\[3pt]
   Keep top-$W$ leaves (beam frontier).};

% ================================================================
%  STAGE 2 — Sub-block WAND     (panel y: 4.1..8.6)
% ================================================================
\begin{scope}[on background layer]
  \fill[panelBorder] (4.65, 4.15) rectangle (10.15, 8.6);
\end{scope}
\node[font=\small\bfseries] at (7.45, 8.3) {Stage 2: Sub-block Score Computation};

% Green containers (behind the rows)
\begin{scope}[on background layer]
  \fill[green!8, rounded corners=6pt, draw=green!40, very thick]
    (5.2, 4.35) rectangle (7.15, 8.05);
  \fill[green!8, rounded corners=6pt, draw=green!40, very thick]
    (7.75, 4.35) rectangle (9.7, 8.05);
\end{scope}

\node[font=\small\bfseries] at (6.17, 7.85) {Leaf block A};
\node[subRow] (a1) at (6.17, 7.25) {$\mathcal{SB}_1$};
\node[subRow] (a2) at (6.17, 6.35) {$\mathcal{SB}_2$};
\node[selRow] (a3) at (6.17, 5.45) {$\mathcal{SB}_3$ \checkmark};
\node[subRow] (a4) at (6.17, 4.60) {$\mathcal{SB}_4$};

\node[font=\small\bfseries] at (8.72, 7.85) {Leaf block B};
\node[subRow] (b1) at (8.72, 7.25) {$\mathcal{SB}_5$};
\node[selRow] (b2) at (8.72, 6.35) {$\mathcal{SB}_6$ \checkmark};
\node[subRow] (b3) at (8.72, 5.45) {$\mathcal{SB}_7$};
\node[subRow] (b4) at (8.72, 4.60) {$\mathcal{SB}_8$};

\draw[arrow] (sl1.east) to[out=5, in=180] (5.2, 5.45);
\draw[arrow] (sl2.east) to[out=5, in=180] (5.2, 6.35);

% S2 description
\node[font=\small, align=center, text width=52mm] at (7.45, 3.25)
  {Fetch sub-block \texttt{$W\cdot|Q|\cdot\lceil |\mathcal{B}|/s\rceil$} rows via PIR.\\[3pt]
   Compute and Rank Sub-block score locally %\\[3pt]
   to select $k_s$ promising sub-blocks.};

% ================================================================
%  STAGE 3 — Embedding Re-rank  (panel y: 4.1..8.6)
% ================================================================
\begin{scope}[on background layer]
  \fill[panelBorder] (10.3, 4.05) rectangle (15.6, 8.6);
\end{scope}
\node[font=\small\bfseries] at (12.95, 8.3) {Stage 3: Client Local Ranking};

% Doc pool background
\begin{scope}[on background layer]
  \fill[orange!6, rounded corners=6pt, draw=orange!45, very thick]
    (10.55, 5.5) rectangle (15.35, 8.0);
\end{scope}
\node[font=\small, align=center] at (12.95, 7.8)
  {Sub-block Document Embeddings};

\node[docBox]  (d1) at (11.4, 7.1) {$e_{D_1}$};
\node[selDoc]  (d2) at (12.95,7.1) {$e_{D_2}$ $\star$};
\node[docBox]  (d3) at (14.5, 7.1) {$e_{D_3}$};
\node[selDoc]  (d4) at (11.4, 6.1) {$e_{D_4}$ $\star$};
\node[docBox]  (d5) at (12.95,6.1) {$e_{D_5}$};
\node[docBox]  (d6) at (14.5, 6.1) {$e_{D_6}$};

\draw[arrow] (a3.east) to[out=0, in=180] (10.55, 6.1);
\draw[arrow] (b2.east) to[out=0, in=180] (10.55, 6.7);

% cosine arrow (from pool bottom to output box)
\draw[bluearrow] (12.95, 5.5) -- (12.95, 4.95)
  node[midway, right=4pt, font=\small]
    {client: $\cos(\mathbf{e}_Q,\mathbf{e}_D)$};

% output box
\node[draw, rounded corners=5pt, fill=teal!12, thick,
      minimum width=38mm, minimum height=9mm,
      font=\normalsize, align=center]
  (out) at (12.95, 4.55) {Output: top-$k$ doc IDs};

% S3 description
\node[font=\small, align=center, text width=48mm] at (12.95, 3.25)
  {Retrieve embeddings via PIR;\\[3pt]
   client ranks by $\cos(\mathbf{e}_Q,\mathbf{e}_D)$\\[3pt]
   and returns final top-$k$ results.};

% ================================================================
%  STAGE DIVIDERS
% ================================================================
% \draw[gray!30, very thick, dashed] (4.55, 8.7) -- (4.55, 2.75);
% \draw[gray!30, very thick, dashed] (10.25,8.7) -- (10.25,2.75);

% ================================================================
%  PRIVACY FOOTER   y = 2.1
% ================================================================
% \node[draw=gray!50, rounded corners=6pt, fill=gray!6, thick,
%       minimum width=176mm, minimum height=13mm,
%       font=\small, align=center]
%   at (7.3, 1.55)
%   {\textbf{Privacy guarantee:} All PIR queries conceal which rows are fetched.
%    \quad The server observes only access sizes determined by public parameters.};

% ================================================================
%  LEGEND   y = 0.85  (single-row, centred)
% ================================================================
% \node[draw=gray!45, rounded corners=5pt, fill=white, thick, inner sep=8pt,
%       font=\small]
%   at (7.3, 1.45) {%
%   \begin{tabular}{@{}cl@{\quad\quad}cl@{\quad\quad}cl@{}}
%     \raisebox{-3pt}{\tikz\node[selLeaf, minimum size=7mm]{};}
%       & Selected leaf (beam) &
%     \raisebox{-3pt}{\tikz\node[selRow, minimum width=16mm, minimum height=7mm]{};}
%       & Selected sub-block &
%     \raisebox{-3pt}{\tikz\node[selDoc, minimum width=16mm, minimum height=7mm]{};}
%       & Top-ranked document
%   \end{tabular}};
\node[draw=gray!45, rounded corners=5pt, fill=white, thick, inner sep=8pt,
      font=\small]
  at (7.3, 1.45) {%
  \begin{tabular}{@{}cl@{\quad\quad}cl@{\quad\quad}cl@{\quad\quad}cl@{}}
    \raisebox{-3pt}{\tikz\node[selLeaf, minimum size=7mm]{};}
      & Top node &
    \raisebox{-3pt}{\tikz\node[compLeaf, minimum size=7mm]{};}
      & Retrieved node (beam) &
    \raisebox{-3pt}{\tikz\node[selRow, minimum width=16mm, minimum height=7mm]{};}
      & Selected sub-block &
    \raisebox{-3pt}{\tikz\node[selDoc, minimum width=16mm, minimum height=7mm]{};}
      & Top-ranked document
  \end{tabular}%
};

\end{tikzpicture}
}
\caption{Three-stage \tree protocol illustration. The interaction are all private under PIR.}
\label{illu:bm25tree}
\end{figure}

% \begin{figure}
% \centering
% \includegraphics[width=0.8\textwidth]{figures/BM25Tree_interactive.pdf}
% \caption{2-stage \tree protocol illustration for a binary tree with beam search width $W=1$. Purple lines denote PIR interaction.}
% \label{illu:bm25tree}
% \end{figure}

\noindent\textbf{Solving Block-Based BM25 Challenges.}
To resolve \circled{1}, we organize the blocks recursively into a tree
whose leaves are the individual blocks and whose internal nodes each
represent a \emph{super-block} spanning the documents of all blocks in
their subtree. 
At every node we store an array of $|\vocab|$ entries,
where the $i$-th entry holds the maximum lexical score of term $i$ over
all documents in that super-block. 
For full generality, we consider $r$-ary trees, i.e., branching factor $r$, so as to reduce the height of the tree.
Next, we descend the tree with a \emph{beam search}~\cite{bisiani1987beam}
of width $W$, confine the next level traversal to the children of the highest scoring $W$ nodes. 
Since the previous level advances only $W$ super-blocks (and
each super-block has $r$ children), layer $i$ considers only $W \times r$ candidate super-blocks. For each
candidate, the algorithm sums the upper bounds of the query terms in $Q$
and retains the $W$ highest-scoring super-blocks of the level. 
It is worth noting that given that beam search is a heuristic technique, it doesn't guarantee to find the true top-$k$ blocks (but our experiments show better-than-SOTA accuracy). 
Overall, the traversal descends
$\lceil \log_r (|\corpus| / |\mathcal{B}|) \rceil$ levels and examines at
most $W\cdot r$ nodes per level, for a total of
$O\!\left(W\cdot r \cdot |Q| \cdot \log_r \frac{|\corpus|}{|\mathcal{B}|}\right)$ upper-bound array
fetches. The baseline, in contrast, scans blocks linearly and requires
$|Q| \frac{|\corpus|}{|\mathcal{B}|}$ array fetches, i.e., from linear to logarithmic. 
Regarding \circled{2}, the \client proceeds interactively: at each level \client
retrieves via PIR the relevant array entries from the candidate super-blocks that \client already identified, sums them locally to obtain
the total upper bound for each candidate super-block, and uses the result to select which
super-blocks become candidates at the next level. 
Finally, the bottom level stores not the embeddings themselves but the identifiers of the embeddings contained in each block.
Retrieving the embeddings therefore requires one additional round of PIR, after which the client performs the dense retrieval step locally. 
Overall, the $\client$ performs $\log_r \frac{|\corpus|}{|\mathcal{B}|}$ PIR rounds, and at each round it batches $W\cdot r \cdot |Q|$ array indices to retrieve (here, we concatenate all arrays-of-upper-bounds of the level into a single one). 
We apply four further optimizations to improve accuracy and performance. 
First, at preprocessing time $\server$ clusters the corpus so that each block contains mutually similar documents. 
Second, each PIR entry at a given node $\mathcal B$ and word $w$ stores the scores of $w$ w.r.t. $r$ children of $\mathcal B$ rather than to $\mathcal B$ itself. At query time, this reduces the PIR cost from $|Q|Wr$ queries over a $|\vocab|\times r^{\ell+1}$-sized child-level PIR database to $|Q|W$ queries over a $|\vocab|\times r^\ell$-sized current-level PIR database.
Third, we add a refinement layer below each leaf block, separate from the tree traversal. Each leaf contains $|\mathcal{B}|$ documents and is partitioned into sub-blocks of $s$ documents, giving branch factor $r^\star=\lceil |\mathcal{B}|/s\rceil$. After the tree traversal filters the search space to $W\times |\mathcal{B}|$ candidate documents, the refinement layer selects the top $k_s$ sub-blocks, reducing the embedding-fetch cost from $|\mathcal{B}|\times W$ embeddings to $k_s\times s$.
Finally, to avoid storing $|\mathcal{V}|$ mostly-zero upper bounds per super-block at each level, we realize each level's PIR database as a cuckoo hash table holding only non-zero entries, trading $m$ PIR queries per lookup for a substantial reduction in server storage and PIR cost (see Appendix~\ref{app:cuckoo}). 
There are many parameters that affect search speed and quality, such as branching factor, $r$, beam width, $W$, block size, $|\mathcal{B}|$, etc. we study the effect and the interplay of the different parameters in Appendix~\ref{app:tree-experiment}. 
A high-level design of \tree is shown in Figure ~\ref{illu:bm25tree}, where we split the design of \tree into three stages: 
In Stage $1$, the \client traverses the tree with beam search, using one PIR round per level, until it reaches $W$ leaf blocks.
In Stage $2$, the \client retrieves via PIR the upper bounds of the sub-blocks of these leaf blocks, selects the top $k_s$ sub-blocks, and obtains the identifiers of their embeddings.
In Stage $3$, the \client retrieves these embeddings via PIR and performs the dense retrieval locally.

\section{Evaluation}
\label{sec:evaluation}
\subsection{Experimental Setup}
\label{sec:setup}

We implement our designs $\bin$ and $\tree$ (Section~\ref{sec:PILLAR_hybrid_retrieval}) in Go and compare them against PACMANN~\citep{pacmann}, the state-of-the-art PPRAG protocol, which answers each query through a multi-round private ANN traversal. 
For all three protocols, we run a hyperparameter grid search over all parameters (we detail the tunable parameters in Table~\ref{tab:hyperparams}). %
We plot/detail the metrics for every configuration of every method over our chosen datasets in Appendix~\ref{app:experiments}. 
Because the number of configurations is large, this section reports five representative configurations per protocol; Appendix~\ref{app:experiments} repeats each plot with all configurations included.

\textbf{On Selecting  Representative Configurations.} 
We adapt a standard algorithm for determining how to minimize the metric \emph{cost} while maximizing the metric \emph{benefit} over multiple configurations, called \emph{Kneedle}~\cite{satopaa2011kneedle}.
 We define the benefit of a configuration by the mean of its MRR@10 and Recall@10, each min-max normalized first, which are standard metrics~\cite{beirThakur2021BEIRAH} in determining the quality of documents retrieved.
The ideal cost for a PPRAG must consider two quantities, ($i$) the number of PIR rounds it needs, and ($ii$) the total data transferred.
We use per-query WAN latency as a proxy for both: each sequential PIR round adds a full network round trip, and each byte transferred adds transmission time over the limited bandwidth. 
Following Kneedle, for each of the three protocols, we compute and select five configurations from the Pareto frontier: The lowest-cost configuration (C1), the configuration with the largest normalized benefit relative to its normalized cost (C2), the configurations with the highest MRR@10 (C3) and highest Recall@10 (C4), and the highest-cost configuration (C5).
Appendix~\ref{app:experiments} details the full procedure.
For completeness, we consider other instantiations of the metric cost, such as LAN latency, computation time, and total data sent, in Appendix~\ref{app:experiments}.

\textbf{Datasets.}
We evaluate on the MS MARCO passage retrieval benchmark~\citep{bajaj2018msmarco}, consisting of $8.8$ million passages and $6{,}980$ queries, and on SciFact~\cite{wadden-etal-2020-fact_scifact}, a corpus of $5{,}183$ documents and $300$ queries.
\begin{wraptable}[24]{r}{0.36\textwidth}
\centering\small
\setlength{\tabcolsep}{4pt}
\renewcommand{\arraystretch}{1.1}
\begin{tabularx}{\linewidth}{@{}r|>{\raggedright\arraybackslash}X@{}}
\toprule
\textbf{Param.} & \textbf{Meaning} \\
\midrule
\multicolumn{2}{@{}c@{}}{\textbf{\bin}} \\
\cmidrule{1-2}
$R$ & Hash table size \\
\cmidrule{1-2}
$t$ & \# Documents per bin \\
\cmidrule{1-2}
$\database_{emb}$ & Retrieve embeddings directly or use two PIR rounds \\
\midrule
\multicolumn{2}{@{}c@{}}{\textbf{\tree}} \\
\cmidrule{1-2}
$r$ & Branching factor of the tree \\
\cmidrule{1-2}
$|\mathcal{B}|$ & \# Documents per leaf block \\
\cmidrule{1-2}
$s$ & \# Documents per sub-block \\
\cmidrule{1-2}
$W$ & Beam width \\
\midrule
\multicolumn{2}{@{}c@{}}{\textbf{PACMANN}} \\
\cmidrule{1-2}
$p$ & \# Steps in ANN traversal \\
\cmidrule{1-2}
$n$ & \# Neighbors retrieved per step \\
\bottomrule
\end{tabularx}
\caption{The parameter grid searched for each protocol. Every combination was run on both
SciFact and MS MARCO.}
\label{tab:hyperparams}
\end{wraptable}
These two datasets cover the qualities that matter for a PPRAG evaluation:
MS MARCO is large, so latency, communication, and memory all matter, and its queries are open-domain, so exact term overlap is a weak measure of similarity, which works against the lexical scoring components in both \bin and \tree.
SciFact is three orders of magnitude smaller, so cost differences matter less, but its queries are scientific claims with rare, specific terminology, where lexical scoring is strongest.

\textbf{Testbed.} Retrieval experiments were run on a Debian 12 container with a 64-core, 128-thread AMD EPYC Zen 2 CPU, but we limit each process to only 15 threads. 
For end-to-end answer quality, the documents retrieved by each selected configuration are passed to an LLM, Qwen2.5-7B-Instruct~\cite{qwen2025qwen25technicalreport}. 
The resulting answers are scored by the RAGAS framework~\cite{es2025ragasautomatedevaluationretrieval}, with Llama-3-70B~\cite{grattafiori2024llama3herdmodels} as judge.
To simulate WAN latency, we use a round-trip-time of 50 milliseconds and 400 megabits per second bandwidth. 
See further details in~\ref{app:testbed}

\subsection{Comparison against PACMANN and Ablation}
\label{sec:comparison}

\begin{figure}[tbp]
    \centering
    \begin{subfigure}[b]{0.45\textwidth}
        \centering
        \includegraphics[width=\textwidth]{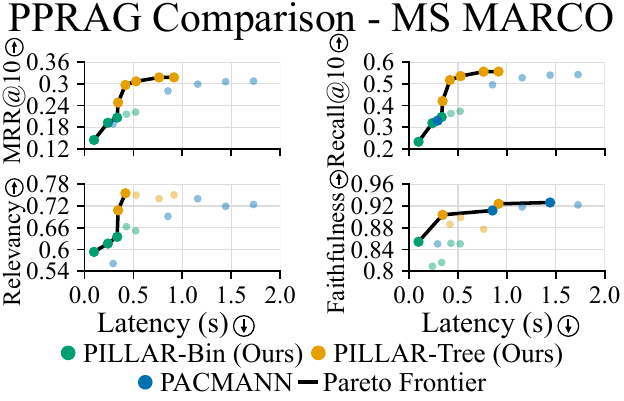}
        \caption{PPRAG Comparison - MS MARCO}
        \label{fig:msmarco}
    \end{subfigure}
    \hfill
    \begin{subfigure}[b]{0.45\textwidth}
        \centering
        \includegraphics[width=\textwidth]{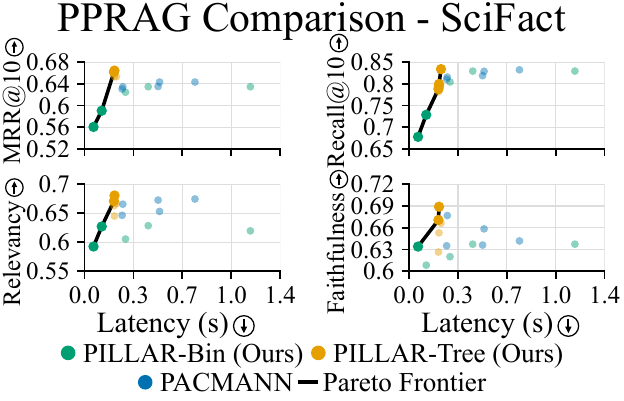}
        \caption{PPRAG Comparison - SciFact}
        \label{fig:SciFact}
    \end{subfigure}
    \caption{Retrieval quality (MRR@10, Recall@10) and answer quality (Answer Relevancy,
    Faithfulness) (Y-axis, higher is better) against per-query WAN latency (X-axis, lower is better) for the five selected configurations of each protocol. 
    The black line is the Pareto frontier, which highlights the best configurations.
    Configurations not on the Pareto frontier are slightly transparent.}
    \label{fig:pacmann_comparison}
\end{figure}

\textbf{Metrics.}
To benchmark our proposed PPRAGs, we use Recall@10, which measures whether the correct document appears among the top $10$ retrieved, and MRR@10, which rewards placing the correct document near the top.
We also use RAGAS metrics, \emph{Faithfulness} measures whether the claims in the LLM's answer are supported by the retrieved documents, and \emph{Answer Relevancy} measures whether the LLM actually answered the question, regardless of the presence of the correct document. 
We report both groups, because MRR and Recall alone do not show whether the retrieved context is useful to an LLM\footnote{The evaluation of PACMANN only considers MRR and Recall, but a RAG pipeline needs to retrieve documents that generate a well-informed answer, so we report RAGAS metrics alongside MRR and Recall}. 
As all of our configurations offer a cost-benefit trade-off, we show the \emph{Pareto frontier} (black line) in Figure~\ref{fig:pacmann_comparison} for each configuration in our two datasets. 
If a configuration lies on the Pareto frontier, no configuration of any protocol achieves better quality (Y-axis) at equal or lower latency (X-axis).
Across all of our experiments (See Figure~\ref{fig:pacmann_comparison}), $93\%$ of the points on the Pareto frontier come from our proposed \proto hybrid architecture and only $7\%$ come from PACMANN.

\textbf{On the Analysis of The Pareto Frontier.}
$\bin$ retrieves the top $10$ documents in as little as $0.098$\,s on MS MARCO and $0.065$\,s on SciFact. 
There is no configuration of PACMANN (on the Pareto frontier) that runs under $0.1$\,s on either datasets (while \bin has multiple),  the fastest PACMANN configuration takes $0.293$\,s on MS MARCO and $0.270$\,s on SciFact, which is $3\times$ to $4\times$ slower. 
As for \tree, in terms of speed, $\tree$ rarely reaches $\bin$'s latencies, since its traversal needs $\lceil\log_r(|\corpus|/|\mathcal{B}|)\rceil$ rounds.
In terms of quality, across all configurations on the Pareto frontier in Figure~\ref{fig:pacmann_comparison} that run in under $1.0$\,s, \tree achieves the highest quality across all metrics and across both datasets.
Across all configurations used to produce Figure~\ref{fig:pacmann_comparison}, $54\%$ of the points are configurations of \tree.

\textbf{$\tree$ Latency by Stages.}
Many \tree configurations retrieve documents that score highly across quality metrics, but, for some of them, the latency is increased.
To investigate this, we show how each of the three stages of $\tree$ affects latency in Figure~\ref{fig:tree_ablation}. %
On MS MARCO, Stages $1$ and $3$ stay roughly constant, which are the traversal and embedding retrieval stages, respectively.
Almost all latency variability comes from Stage $2$, the sub-block retrieval stage.
Stage $ 2$ latency drives retrieval quality; the highest-latency configuration (C5) reaches $0.318$ MRR on MS MARCO, while the lowest-latency configuration reaches $0.248$, a $28\%$ gain.
$W$ determines how many candidate documents Stage $2$ refines, which makes it the first parameter to tune on a new dataset.
On SciFact, the five configurations differ by under $20$\,ms. This is due to the size of the tree, which will always be  smaller on a small dataset (such as SciFact).

\label{sec:ablation}
\begin{figure}[t]
    \centering
    \begin{subfigure}[t]{0.48\textwidth}
        \centering
        \includegraphics[width=\textwidth]{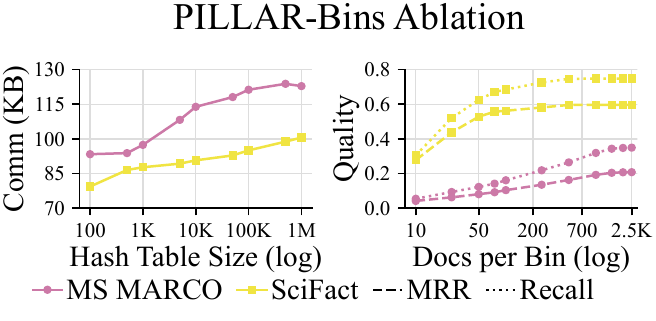}
        \caption{$\bin$ Ablation}
        \label{fig:bins_ablation}
    \end{subfigure}
    \hfill
    \begin{subfigure}[t]{0.48\textwidth}
        \centering \includegraphics[width=0.965\textwidth]{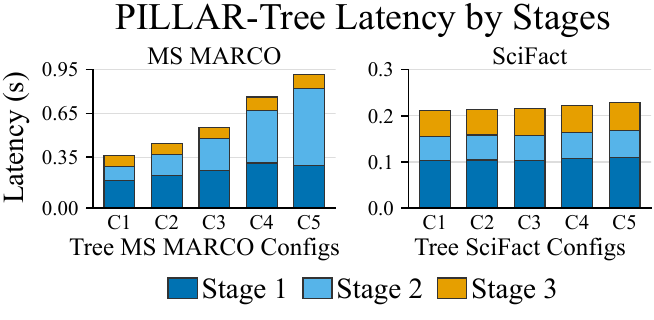}
        \caption{$\tree$ Latency by Stages}
        \label{fig:tree_ablation}
    \end{subfigure}
    \caption{Left: Two $\bin$ ablation plots, total communication cost per query against the size of the hash table (log scale), and retrieval quality against documents stored in a bin (log scale).
    Right: per-query latency of the five selected $\tree$ configurations (C1-C5) (defined in Section~\ref{sec:setup} and Table~\ref{tab:tree-param-tuning} lists their specific parameters) by the 3 stages defined in Section~\ref{sec:bm25tree}.}
    \label{fig:ablation}
\end{figure}

\textbf{$\bin$ Ablation.}
We test the two parameters of \bin: the size of the hash table, and the number of documents stored in a bin. 
Specifically, we investigate ($i$) how the size of the hash table affects the communication cost, as well as ($ii$) how the number of documents stored in a bin affects retrieval quality (we provide a more extensive ablation on \bin  in Appendix~\ref{app:experiments}).
The left plot of Figure~\ref{fig:bins_ablation} fixes the documents per bin at $10$ and varies the hash table size (X-axis), showing that total communication cost per query (Y-axis, Comm KB) increases with the size of the hash table (as is expected when using PIR).
We vary the size of the hash table by $6$ orders of magnitude, but see that the communication cost for the largest hash table is not even $2\times$ the cost of the smallest.
Thus, hash table size has little impact on the total communication cost, allowing flexibility to choose the hash table size that provides the best retrieval quality (as shown in Appendix~\ref{app:experiments}).
The right plot of Figure~\ref{fig:bins_ablation} fixes the hash table sizes at $1$ million for MS MARCO and $5$ thousand for SciFact and varies the number of documents stored in a bin (X-axis). 
We see that MRR@10/Recall@10 (Y-axis, Quality) for the retrieved documents increases as the number of documents stored in a bin increases.
Scifact sees diminishing returns, after storing $700$ documents per bin, there is little more retrieval quality to gain.
MS Marco continues to see benefits from more returned documents, even at the highest tested values, $1.0$K-$2.5$K documents per bin, MS MARCO still increases.

\section{Conclusion}

We presented \proto, a pair of PIR-friendly hybrid retrieval protocols ($\bin$ and $\tree$) that split retrieval between a cheap lexical score and a semantic re-ranking performed entirely by the client. 
We evaluate our two protocols using the state-of-the-art, PACMANN. \bin always has a faster configuration available, answering queries three times faster than any configuration PACMANN admits, while $\tree$ frequently retrieves better quality documents than PACMANN at the same speed.

\section*{Acknowledgements}
Daniel Blackley and Evgenios M. Kornaropoulos were partially supported by NSF awards \#2154732 and \#2439951. Truong Son Nguyen and Ni Trieu were partially supported by NSF award  \#2451972, ARPA-H award  \#1AY2AX000167-01, and Amazon Research award.

\bibliographystyle{plainnat}
\bibliography{bibs/abbrev3,bibs/crypto,bibs/ref}

\appendix

\section*{Appendix}

\section{AI use statement}
In this work, we used generative AI tools to assist in implementing methods, namely writing code for our protocols and result plots.
We have not used generative AI tools to generate synthetic datasets, develop theoretical models or conceptual frameworks, formulate mathematical claims, provide critical ingredients for proving mathematical claims, assist in the writing of proofs, propose or refine hypotheses, design or provide feedback on research methodology or experiments, clean or reformat datasets, or interpret results, and assistance with translation and qualitative or thematic data analysis are not applicable to this work. Additionally, we used generative AI tools to draft parts of the paper (initial text, section structure and titles, and \LaTeX{} layout), to edit the paper for grammar, spelling, and readability, and to identify and summarize relevant literature. We have reviewed all AI-assisted work. All AI-generated code was reviewed, re-written, tested, and validated for correctness by the authors. AI-drafted text served only as a starting point and was revised by the authors, and all technical content, claims, and proofs are the authors' own. Literature identified or summarized with AI assistance was checked against the original sources, and all citations were verified manually. We take responsibility for the final content of this work, including text, claims or artifacts produced with the aid of generative AI.

\section{Reproducibility}

We provide an extensive Appendix that details the step-by-step operation of \tree and \bin (Appendix~\ref{app:protocol_details}) and we also detail all the different parameters tested for all three protocols  (Appendix~\ref{app:experiments} and Table~\ref{tab:hyperparams}). 
All code for \bin is available at \url{https://github.com/dkblackley/bins-go} and \tree at \url{https://github.com/sonnguyenasu/bm25-tree}.

\section{Notation}
In the paper we use different hyper-parameters and variables for our protocols. We detailed the parameters notations and their corresponding definition in Table ~\ref{tab:notation}. %

\begin{table}[t]
    \centering
    \begin{tabular}{r|l}
    \toprule
        Parameter & Definition \\
    \midrule
        $\corpus$ & The corpus \\
        $\mathcal V$ & Set of all unique terms in $\corpus$ \\
        $d$ & Embedding vector dimension \\
        $D$ & Documents in $\corpus$ \\
        $e_D$ & Embedding vector of document $D$ \\
        $Q$ & Client query \\
        $e_Q$ & Embedding vector of client query \\
        $k$ & Number of relevant document retrieved \\
        $\ind(\cdot)$ & Function to represent the index of a document/ document block \\
    \midrule
        $t$ & Number of documents returned per query term \\
        $R$ & Number of rows used in \bin \\
        $H$ & The public hash function used in \bin \\
    \midrule        
        $B$ & Block Size \\
        $\mathcal B$ & Block of $B$ documents in the corpus \\
        $\sigma_w(\mathcal B)$ & Largest score between term $w$ and any document in $\mathcal B$ \\
        $r$ & Branching factor of the tree used in \tree \\
        $\mathcal H$ & Tree height \\
        $W$ & Beam width used in \tree's beam search \\
        $s$ & Sub-block size \\
        $q^\star$ & Fixed query size, attained by pad/truncate the client query \\
        $k_s$ & Number of sub-block with document embeddings being retrieved\\
    \bottomrule
    \end{tabular}
    \caption{Definition of variables used in the paper}
    \label{tab:notation}
\end{table}

\section{Additional Experiment Results}
\label{app:experiments}

In this section we present additional result of all configurations that we use some of them in the main paper. 

\subsection{Hyperparameter Tuning on \tree Configurations}
\label{app:tree-experiment}
We perform a grid search on $\tree$ hyperparameters and evaluate recall, MRR@10, and estimated number of PIR calls needed. From the evaluation results, we select one representative operating points from the Pareto frontier shown in Figure ~\ref{fig:bm25tree-pareto} using a defined process detailed later, together with the four configurations attaining the lowest PIR call, the highest PIR calls, the highest MRR@10 and highest Recall@10. We then run a full evaluation on these 5 configurations: we (1) evaluate the MRR@10 and Recall@10 of the configurations on a test set of $1,500$ queries from MSMARCO train set for MSMARCO, and a test set of $75$ queries from Scifact train set for Scifact-- to make sure tuning set and test set are different -- (2) execute each configuration under PIR to benchmark runtime and (3) use RAGAS to evaluate downstream generation quality. For MSMarco, the search is performed over $B\in \{16,32,64\}, r\in\{8,32,128\}, q^\star \in \{2,4,6,8\}, W\in\{50, 100,200,400\}, s\in \{4, 8, 16\}$. For Scifact, the search is performed over $B\in \{8,16,32,64\}, r\in\{4,8,32,128\}, q^\star \in \{4,8,12,16\}, W\in\{5, 10,20,40,80\}, s\in \{2,4,8\}$. The set of 5 candidate configurations along with their corresponding MRR are shown in Table ~\ref{tab:tree-param-tuning}.

\subsubsection{Representative Configurations Selection.} 
As discussed earlier, we use Pareto frontier to choose a representative configuration along with two configurations represent the best recall@10 and best MRR@10. Here we show how we choose the five configurations.

\paragraph{Pareto Frontier} Let $\Theta$ denote the evaluated hyperparameter configurations. Following
the standard definition of Pareto optimality
\cite{miettinen1999nonlinear}, a configuration $\theta'$ dominates
$\theta$, denoted $\theta' \succ \theta$, if
\[
\mathsf{MRR}(\theta') \geq \mathsf{MRR}(\theta),\qquad
\mathsf{Recall}(\theta') \geq \mathsf{Recall}(\theta),\qquad
\mathsf{PIRCall}(\theta') \leq \mathsf{PIRCall}(\theta),
\]
with at least one strict inequality. The Pareto frontier is the set of configurations where improving one metric worsen at least another:
\[
\mathcal{P}
=
\left\{
\theta\in\Theta:
\nexists\theta'\in\Theta
\text{ such that }\theta'\succ\theta
\right\}.
\]

\paragraph{Effectiveness Score.} To visualize the effectiveness-cost trade-off, we min-max normalize each retrieval metric:
\[
\widetilde{\mathsf{MRR}}(\theta)
=
\frac{
  \mathsf{MRR}(\theta)-\mathsf{MRR}_{\min}
}{
  \mathsf{MRR}_{\max}-\mathsf{MRR}_{\min}
}, \qquad
\widetilde{\mathsf{Recall}}(\theta)
=
\frac{
  \mathsf{Recall}(\theta)-\mathsf{Recall}_{\min}
}{
  \mathsf{Recall}_{\max}-\mathsf{Recall}_{\min}
}.
\]
We define the aggregate effectiveness score as
\[
E(\theta)
=
\frac{
  \widetilde{\mathsf{MRR}}(\theta)
  +
  \widetilde{\mathsf{Recall}}(\theta)
}{2}.
\]

The effectiveness-cost envelope $\mathcal P_E$ is the set when maximizing
$E$ and minimizing $\mathsf{PIRCall}$:
\[
\begin{aligned}
\mathcal{P}_{E} = \Bigl\{ \theta\in\Theta :{}& \nexists\,\theta'\in\Theta \text{ such that } E(\theta')\geq E(\theta) \land \mathsf{PIRCall}(\theta')\leq\mathsf{PIRCall}(\theta), \\
& \text{with at least one strict inequality} \Bigr\}.
\end{aligned}
\]

Following the normalized difference construction of Kneedle
\cite{satopaa2011kneedle}, we define
\[
x(\theta)
=
\frac{
  \log\mathsf{PIRCall}(\theta)
  -
  \log\mathsf{PIRCall}_{\min}
}{
  \log\mathsf{PIRCall}_{\max}
  -
  \log\mathsf{PIRCall}_{\min}
},
\qquad
y(\theta)
=
\frac{
  E(\theta)-E_{\min}
}{
  E_{\max}-E_{\min}
}.
\]
The knee point, which represents the point on the envelope which witness the strongest gain in effectiveness with respect to its PIR cost, is defined as
\(\theta_{\mathrm{knee}}
=
\arg\max_{\theta\in\mathcal{P}_{E}}
\left(y(\theta)-x(\theta)\right).
\)

We then chooses the points C1, C2, C3, C4, C5 as:
\begin{itemize}
    \item C1: Point C1 with lowest number of PIR calls
    \item C2: Point C2 is the knee point $\theta_{\mathrm{knee}}$ %
    \item C3: Point C3 with highest MRR@10 on tune set
    \item C4: Point C4 with highest Recall@10 on tune set
    \item C5: Point C5 with highest number of PIR calls
\end{itemize}

\begin{table*}[t]
  \centering
  \small
  \setlength{\tabcolsep}{2pt}
  \begin{tabular}{llrrrrrrccccr}
    \toprule
    Dataset & Config
      & $B$ & $r$ & $q^\star$ & $W$ & $s$ & $k_s$
      & \multicolumn{2}{c}{Tuning}
      & \multicolumn{2}{c}{Test}
      & \multirow{2}{*}{\shortstack{PIR calls\\/ query}} \\
    \cmidrule(lr){9-10}\cmidrule(lr){11-12}
      & & & & & & &
      & MRR@10 & Recall@10 & MRR@10 & Recall@10 & \\
    \midrule
    \multirow{5}{*}{MS MARCO}
      & C1 & 64 & 128 & 2 & 50  & 16 & 32
      & 0.1833 & 0.3055 & 0.1962 & 0.3426 & \textbf{472} \\
      & C2 & 64 & 128 & 4 & 50  & 16 & 64
      & 0.2833 & 0.4880 & 0.3190 & 0.5751 & 944 \\
      & C3 & 16 & 128 & 8 & 400 & 8  & 256
      & \textbf{0.3218} & 0.5639 & 0.3551 & 0.6573 & 13616 \\
      & C4 & 64 & 128 & 8 & 400 & 4  & 256
      & 0.3211 & \textbf{0.5649} & 0.3549 & \textbf{0.6631} & 13216 \\
      & C5 & 16 & 8   & 8 & 400 & 16 & 256
      & 0.3213 & 0.5625 & \textbf{0.3556} & 0.6580 & 28352 \\
    \midrule
    \multirow{5}{*}{SciFact}
      & C1 & 64 & 128 & 4  & 5  & 8 & 5
      & 0.5303 & 0.5909 & 0.5852 & 0.6333 & \textbf{53} \\
      & C2 & 16 & 128 & 12 & 10 & 4 & 10
      & 0.6623 & 0.8008 & 0.6192 & 0.7600 & 346 \\
      & C3 & 16 & 128 & 16 & 20 & 4 & 10
      & \textbf{0.6697} & 0.8043 & \textbf{0.6350} & 0.7867 & 778 \\
      & C4 & 32 & 128 & 12 & 80 & 2 & 40
      & 0.6606 & \textbf{0.8428} & 0.6225 & 0.8000 & 2032 \\
      & C5 & 8  & 4   & 16 & 80 & 4 & 80
      & 0.6518 & 0.8267 & 0.6249 & \textbf{0.8067} & 6992 \\
    \bottomrule
  \end{tabular}
  \caption{Dense-only configurations selected from the plaintext parameter
  search. C1 has the lowest padded PIR cost, C2 is the
  effectiveness--cost knee, C3 has the highest tuning MRR@10, C4 has the
  highest tuning Recall@10, and C5 has the highest padded PIR cost. Tuning
  results use 6,980 MS MARCO queries and 300 SciFact queries; test results use
  1,500 MS MARCO queries and 75 held-out SciFact queries. Bold metric values
  are column maxima within each dataset, while bold PIR costs are column
  minima. PIR-query counts assume constant-work padding and include two
  Cuckoo calls for every Stage~1 and Stage~2 lookup and one query for each
  retrieved embedding row. They count primitive PIR queries rather than
  network round trips after batching.}
  \label{tab:tree-param-tuning}
\end{table*}

\begin{figure*}[t]
  \centering
  \begin{tikzpicture}
    \begin{groupplot}[
      group style={group size=2 by 1,horizontal sep=1.15cm},
      width=0.49\textwidth,
      height=0.36\textwidth,
      xmode=log,
      log basis x=10,
      grid=both,
      major grid style={gray!25},
      minor grid style={gray!12},
      tick label style={font=\normalsize},
      label style={font=\small},
      title style={font=\small},
      legend style={
        font=\small,
        draw=none,
        fill=none,
        /tikz/every even column/.append style={column sep=0.8em},
      },
    ]

\nextgroupplot[
  title={MS MARCO},
  xlabel={PIR calls / query},
  ylabel={Effectiveness $E$},
  ymax=1.25,
  legend to name=bm25tree-pareto-legend,legend columns=-1,
]
\addplot+[only marks,mark=*,mark size=0.55pt,
  draw=gray!45,fill=gray!35]
  table[x=cost,y=effectiveness] {data/msmarco_all.dat};
\addplot+[only marks,mark=o,mark size=1.15pt,
  draw=blue!70!black]
  table[x=cost,y=effectiveness] {data/msmarco_pareto.dat};
\addplot+[blue!70!black,very thick,mark=none]
  table[x=cost,y=effectiveness] {data/msmarco_envelope.dat};
\addlegendentry{Frontier envelope}
\addplot+[only marks,mark=diamond*,mark size=2.4pt,black]
  table[x=cost,y=effectiveness] {data/msmarco_selected.dat};
\addlegendentry{Selected C1--C5}
\addplot+[only marks,mark=star,mark size=4pt,
  draw=red!70!black,fill=red!65]
  table[x=cost,y=effectiveness] {data/msmarco_knee.dat};
\addlegendentry{Computed knee (C2)}
\node[font=\normalsize,above] at (axis cs:400,0.04) {C1};
\node[font=\normalsize,above,text=red!70!black] at (axis cs:944,0.7886466756) {C2 (knee)};
\node[font=\normalsize,above left] at (axis cs:13616,0.9982213353) {C3};
\node[font=\normalsize,above right] at (axis cs:13216,0.9975979642) {C4};
\node[font=\normalsize,above] at (axis cs:28352,0.9676750583) {C5};

\nextgroupplot[
  title={SciFact},
  xlabel={PIR calls / query},
  xmin=50,
  xmax=8000,
  ymax=1.25,
  legend columns=-1,
]
\addplot+[only marks,mark=*,mark size=0.55pt,
  draw=gray!45,fill=gray!35,forget plot]
  table[x=cost,y=effectiveness] {data/scifact_all.dat};

\addplot+[only marks,mark=o,mark size=1.15pt,
  draw=blue!70!black,forget plot]
  table[x=cost,y=effectiveness] {data/scifact_pareto.dat};

\addplot+[blue!70!black,very thick,mark=none,forget plot]
  table[x=cost,y=effectiveness] {data/scifact_envelope.dat};

\addplot+[only marks,mark=diamond*,mark size=2.4pt,black,forget plot]
  table[x=cost,y=effectiveness] {data/scifact_selected.dat};

\addplot+[only marks,mark=star,mark size=4pt,
  draw=red!70!black,fill=red!65,forget plot]
  table[x=cost,y=effectiveness] {data/scifact_knee.dat};

\node[font=\normalsize,above] at (axis cs:65,0.3) {C1};
\node[font=\normalsize,above left,text=red!70!black] at (axis cs:346,0.9064738439) {C2 (knee)};
\node[font=\normalsize,above] at (axis cs:778,0.9350880480) {C3};
\node[font=\normalsize,above] at (axis cs:2032,0.9720858375 ) {C4};
\node[font=\normalsize,above] at (axis cs:6000,0.9246055427) {C5};

    \end{groupplot}
  \end{tikzpicture}
  \par\smallskip
  \ref{bm25tree-pareto-legend}
  \caption{Effectiveness-PIR-cost trade-off over the complete plaintext grid.
  Blue marks are all tested configurations and circles are configurations
  nondominated in MRR@10, Recall@10, and padded PIR calls. The solid curve is
  the upper envelope $\mathcal P_E$, C1,C2,C3,C4,C5 are chosen configurations.}
  \label{fig:bm25tree-pareto}
\end{figure*}

\begin{table*}[t]
  \centering
  \small
  \setlength{\tabcolsep}{3pt}
  \begin{tabular}{llrrrccr}
    \toprule
    Dataset & Config
    & \texttt{bs} & \texttt{dpb} &
    MRR@10 & Recall@10
      & \multirow{2}{*}{\shortstack{WAN time \\/ query (s)}} \\
      & & & 
      & & & \\
    \midrule
    \multirow{5}{*}{MS MARCO}
      & C1 &  $10^6$ & 250
      & 0.145 & 0.2337 & \textbf{0.098} \\
      & C2 &  $10^6$ & 1500
      & 0.2067 & 0.3474 & 0.334 \\
      & C3 &  $10^6$ & 2000
      & 0.2160 & 0.3635 & 0.429 \\
      & C4 &  $10^6$ & 2500
      & \textbf{0.2219} & \textbf{0.3739} & 0.523 \\
      & C5 &  $10^6$ & 1000
      & 0.1925 & 0.3192 & 0.239 \\
    \midrule
    \multirow{5}{*}{SciFact}
      & C1 &  $10^6$ & 50
      & 0.5607 & 0.6781 & \textbf{0.065} \\
      & C2 &  $10^6$ & 250
      & 0.5905 & 0.7287 & 0.124 \\
      & C3 &  100 & 1500
      & 0.6245 & 0.8042 & 0.294 \\
      & C4 & 100 & 2500
      & \textbf{0.6346} & \textbf{0.8292} & 1.187 \\
      & C5 &  100 & 2500
      & \textbf{0.6346} & \textbf{0.8292} & 0.456 \\
    \bottomrule
  \end{tabular}
  \caption{Bins configurations selected from the plaintext parameter search.}
  \label{tab:bins-param-tuning}
\end{table*}

\begin{table*}[t]
  \centering
  \small
  \setlength{\tabcolsep}{3pt}
  \begin{tabular}{llrrccr}
    \toprule
    Dataset & Config
    & \texttt{bs} & \texttt{dpb} &
    MRR@10 & Recall@10
      & \multirow{2}{*}{\shortstack{WAN time \\/ query (s)}} \\
      & & & 
      & & & \\
    \midrule
    \multirow{5}{*}{MS MARCO}
      & C1 & 5  & 48
      & 0.1892 & 0.3306 & \textbf{0.293} \\
      & C2 & 15 & 40
      & 0.2802 & 0.4955 & 0.853 \\
      & C3 & 20 & 48
      & 0.2990 & 0.5275 & 1.157 \\
      & C4 & 30 & 48
      & \textbf{0.3073} & \textbf{0.5422} & 1.727 \\
      & C5 & 25 & 48
      & 0.3057 & 0.5397 & 1.442 \\
    \midrule
    \multirow{5}{*}{SciFact}
      & C1 & 5  & 40
      & 0.6304 & 0.8104 & \textbf{0.270} \\
      & C2 & 5  & 48
      & 0.6345 & 0.8154 & 0.275 \\
      & C3 & 10 & 40
      & \textbf{0.6432} & 0.8287 & 0.537 \\
      & C4 & 15 & 32
      & \textbf{0.6432} & \textbf{0.8321} & 0.791 \\
      & C5 & 10 & 32
      & 0.6350 & 0.8187 & 0.527 \\
    \bottomrule
  \end{tabular}
  \caption{PACMANN configurations selected from the plaintext parameter
  search.}
  \label{tab:pacmann-param-tuning}
\end{table*}

PACMANN and \bin follow a similar protocol to \tree, but instead they \emph{directly} use the recorded WAN time as the cost. We detail PACMANN's best configs in Table~\ref{tab:pacmann-param-tuning} and \bin's best in Table~\ref{tab:bins-param-tuning}.

\subsection{Extensive comparisons}
\label{app:comparison-experiments}
In this section, we plot every different configuration we ran across PACMANN, \bin and \tree. We use the same Pareot frontiers as were in Section~\ref{sec:evaluation}. We have over $2{,}000$ total configurations, which is why we limited our comparison previously.
To keep each plot legible, we only show the entire Pareto frontier and circled points on the frontier with a black outline.
To get Faithfulness and Answer Relevancy for every single different configuration using the GPUs available to us would've taken a substantial amount of time, as a result, we decided to include more total configurations but with more basic metrics. Every plot here only uses MRR and Recall

\begin{figure}[h]
    \centering
    \begin{subfigure}{0.49\linewidth}
        \includegraphics[width=\linewidth]{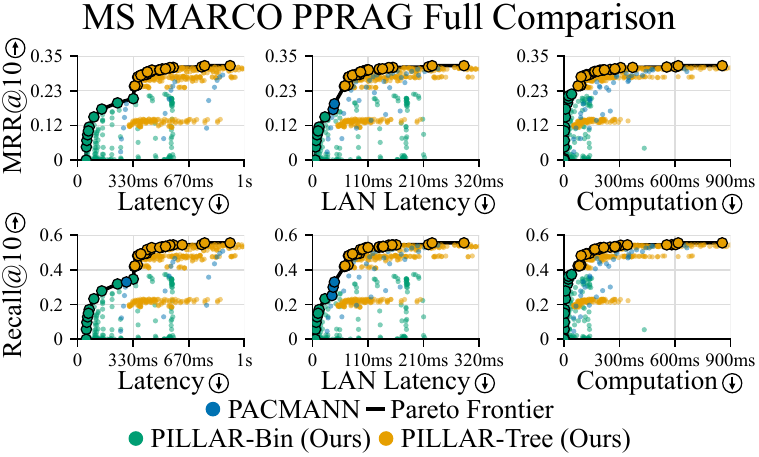}
        \caption{MS MARCO}
        \label{fig:pareto_time_msmarco}
    \end{subfigure}\hfill
    \begin{subfigure}{0.49\linewidth}
        \includegraphics[width=\linewidth]{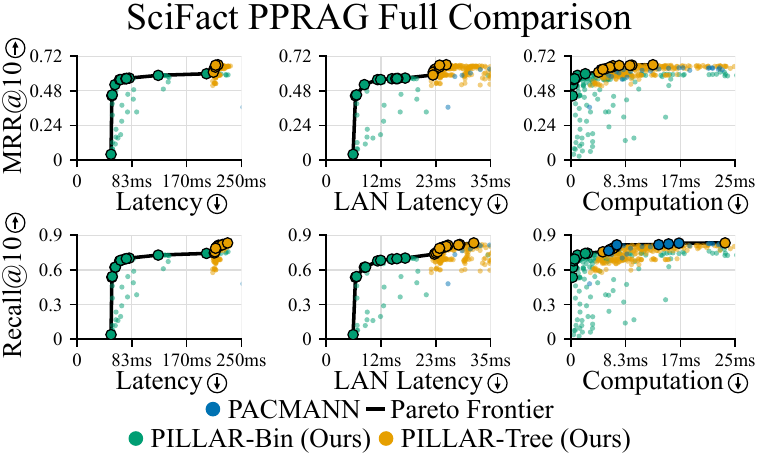}
        \caption{SciFact}
        \label{fig:pareto_time_scifact}
    \end{subfigure}
    \caption{MRR@10 and Recall@10 against per-query WAN latency, LAN latency, and computation time. Highlighted points lie on the global Pareto frontier across all three methods.}
    \label{fig:pareto_time}
\end{figure}

\begin{figure}[t]
    \centering
    \begin{subfigure}{0.49\linewidth}
        \includegraphics[width=\linewidth]{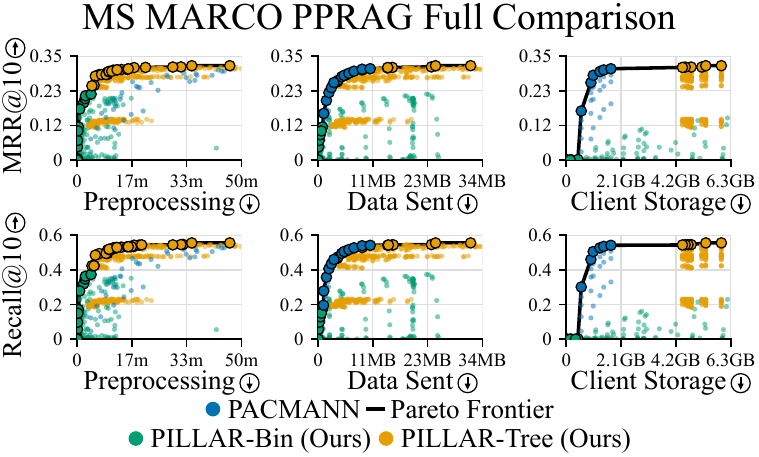}
        \caption{MS MARCO}
        \label{fig:pareto_storage_msmarco}
    \end{subfigure}\hfill
    \begin{subfigure}{0.49\linewidth}
        \includegraphics[width=\linewidth]{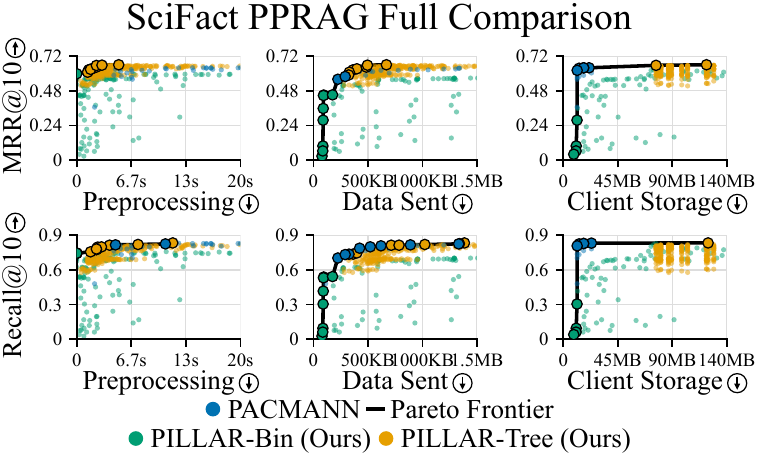}
        \caption{SciFact}
        \label{fig:pareto_storage_scifact}
    \end{subfigure}
    \caption{MRR@10 and Recall@10 against one-time PIR preprocessing time, per-query data sent, and one-time client storage. Highlighted points lie on the global Pareto frontier across all three methods.}
    \label{fig:pareto_storage}
\end{figure}

\textbf{On the Time Costs.}
Figure~\ref{fig:pareto_time} reports quality against per-query WAN latency, LAN latency, and computation time.
If we fix the budget at the $x$-axis limits ($1$\,s WAN, $320$\,ms LAN, and $900$\,ms computation on MS MARCO; $250$\,ms, $35$\,ms, and $25$\,ms on SciFact), \tree achieves the highest MRR@10 and Recall@10 in every panel.
\bin occupies the low-latency end of every frontier. On SciFact, \bin already exceeds $0.5$ MRR@10 within $83$\,ms of WAN latency and accounts for $50.0\%$ of all frontier points.
PACMANN has no configuration on the Pareto frontier in eight of the twelve panels, so it is never the Pareto-optimal choice. These eight panels are MRR@10 under WAN latency and both metrics under computation on MS MARCO, and every panel on SciFact except Recall@10 under computation. In the remaining four panels, PACMANN contributes at most $5$ points.
Across all $377$ frontier points, $97.1\%$ are configurations of \bin or \tree.

\textbf{On the Preprocessing, Communication, and Storage Costs.}
Figure~\ref{fig:pareto_storage} reports quality against PIR preprocessing time, data sent, and client storage. All costs are per query except PIR preprocessing and client storage, which are for the entire end-to-end run.
\emph{Preprocessing.} PACMANN has no configuration on the Pareto frontier on MS MARCO or for MRR@10 on SciFact. This is because its additional rounds of PIR queries require correspondingly more PIR preprocessing.
At the $x$-axis limits ($50$\,min on MS MARCO, $20$\,s on SciFact), \tree achieves the highest quality on both metrics.
\emph{Data sent.} At the $x$-axis limits ($34$\,MB on MS MARCO, $1.5$\,MB on SciFact), \tree again achieves the highest quality. 
PACMANN occupies the middle of the frontier on MS MARCO ($51.9\%$ of points for MRR@10, $51.7\%$ for Recall@10) but only $11.8\%$ and $39.1\%$ on SciFact.
Even though PACMANN may often be the choice for a low amount of total communication, as we have shown previously, the many rounds practically drag down WAN and LAN latency.
\emph{Client storage.} PACMANN's many rounds of interaction let it keep the client-side cost low, and it reaches near-maximal quality with under $2.1$\,GB on MS MARCO and under $45$\,MB on SciFact, whereas \tree's highest-quality configurations require over $4.2$\,GB and $90$\,MB, respectively.
The cost of these rounds is paid in latency, as shown in Figure~\ref{fig:pareto_time}.
Despite this, PACMANN does not hold a majority of the frontier, it only has $35.3\%$ and $33.3\%$ of the points on MS MARCO (tied with \tree), and $45.5\%$ and $50.0\%$ on SciFact. \tree still achieves the highest quality at the $x$-axis limits ($6.3$\,GB and $140$\,MB).
Moreover, on MS MARCO nearly every configuration that is viable of every method requires client storage on the order of gigabytes, so PACMANN's advantage does not change the scale of the client's cost.
Across all $262$ frontier points in Figure~\ref{fig:pareto_storage}, $75.6\%$ are configurations of \bin or \tree, with \tree the largest share on both datasets ($48.0\%$ and $41.6\%$).

\subsection{Further \bin ablation plots}
\label{app:bins_ablation}
\begin{figure}[h]
    \centering
    \begin{minipage}{0.49\linewidth}\centering
        \includegraphics[width=\linewidth]{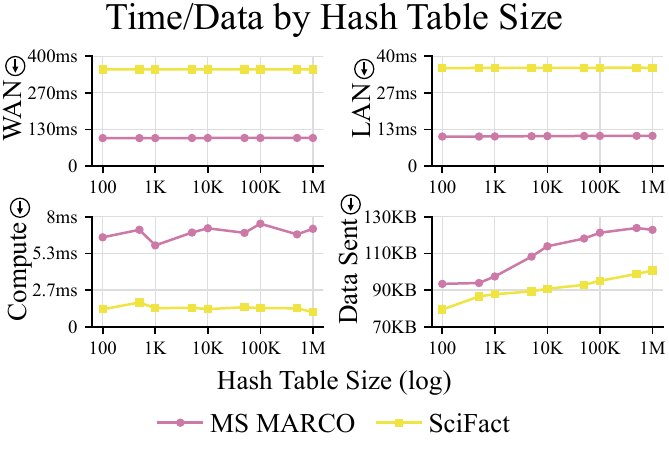}
    \end{minipage}\hfill
    \begin{minipage}{0.49\linewidth}\centering
        \includegraphics[width=\linewidth]{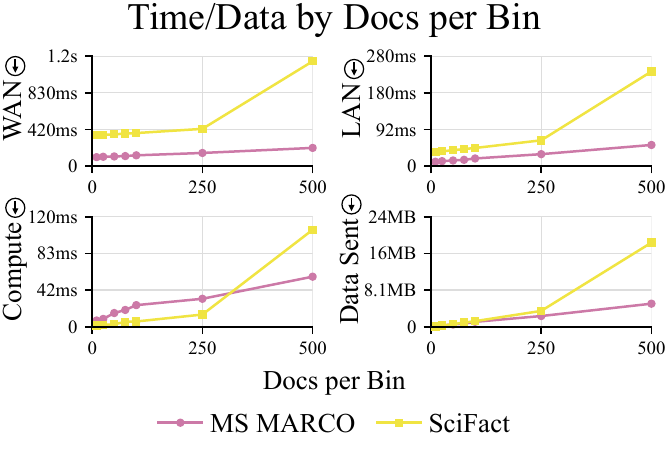}
    \end{minipage}
    \caption{$\bin$ per-query cost. Left: varying the hash table size. Right: varying the number of documents per bin.}
    \label{fig:app_bins_cost}
\end{figure}

\begin{figure}[t]
    \centering
    \begin{minipage}{0.49\linewidth}\centering
        \includegraphics[width=\linewidth]{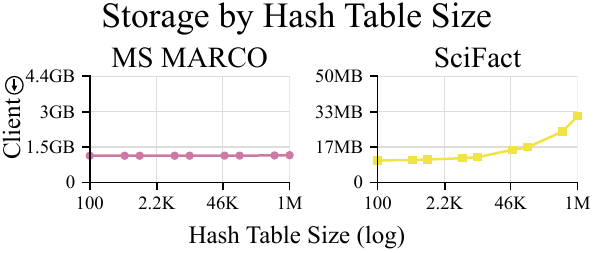}
    \end{minipage}\hfill
    \begin{minipage}{0.49\linewidth}\centering
        \includegraphics[width=\linewidth]{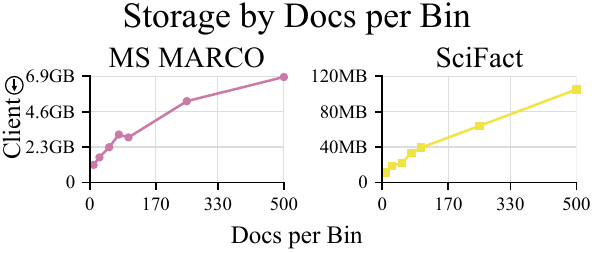}
    \end{minipage}
    \caption{$\bin$ worst-case client storage. Left: varying the hash table size. Right: varying the number of documents per bin.}
    \label{fig:app_bins_storage}
\end{figure}

\begin{figure}[t]
    \centering
    \begin{minipage}{0.49\linewidth}\centering
        \includegraphics[width=\linewidth]{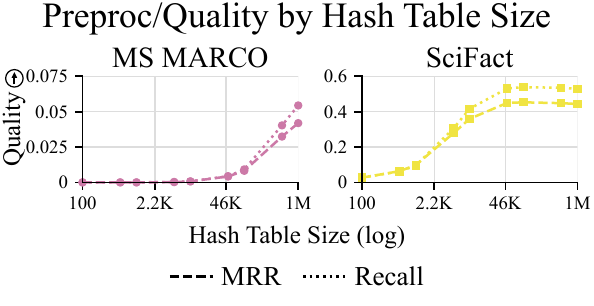}
    \end{minipage}\hfill
    \begin{minipage}{0.49\linewidth}\centering
        \includegraphics[width=\linewidth]{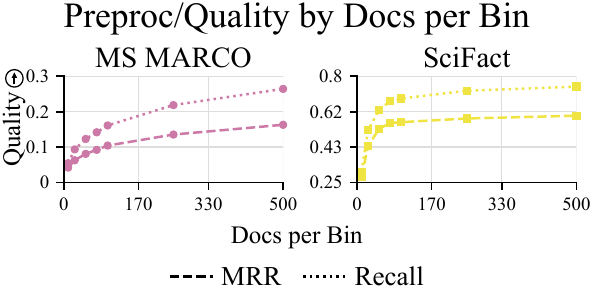}
    \end{minipage}
    \caption{$\bin$ PIR preprocessing time (top) and retrieval quality (bottom). Left: varying the hash table size. Right: varying the number of documents per bin.}
    \label{fig:app_bins_preproc}
\end{figure}

We extend the $\bin$ ablation of Section~\ref{sec:evaluation} with system costs, client storage and preprocessing time.
As in the main text, when varying the hash table size we fix the number of documents per bin at $10$, and when varying the number of documents per bin we fix the hash table size at $1$M for MS MARCO and $5$K for SciFact.
WAN latency, LAN latency, compute (total running time) and data sent (upload and download) are per query. Client storage and PIR preprocessing total costs over the entire run.

\textbf{WAN, LAN and total compute.}
Figure~\ref{fig:app_bins_cost} (left) shows that WAN latency, LAN latency and compute are either not affected by the hash table size increases or show no consistent trend.
Only data sent increases, the same trade-off we discussed in Section~\ref{sec:evaluation}
Figure~\ref{fig:app_bins_cost} (right) shows that all four costs increase with the number of documents per bin.
For SciFact, the increase is small up to $250$ documents per bin and steep between $250$ and $500$.

\begin{wrapfigure}[22]{l}{0.49\linewidth}
    \centering
    \includegraphics[width=\linewidth]{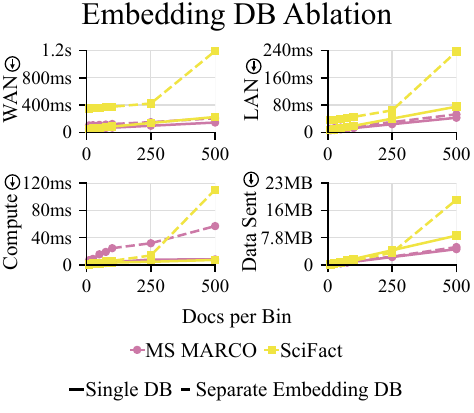}
    \caption{$\bin$ per-query cost when embeddings are stored in the bins (Single DB) or in a separate database (Separate Embedding DB), varying the number of documents per bin.}
    \label{fig:app_bins_embdb}
\end{wrapfigure}
\textbf{Storage costs and PIR preprocessing.}
Figure~\ref{fig:app_bins_storage} reports client storage in the worst case, where every hint is completely full. In practice, client storage is dependent on how many bins are full and ranges from $30$-$50$\% better.
For MS MARCO, client storage is approximately $1$GB and does not change with the hash table size.
As we saw in the ablation comparing against PACMANN, \bin that take up a large client hint tend to not perform well regardless. 
For SciFact, client storage increases from $11$MB at a hash table size of $100$ to $30$MB at $1$M, with most of the increase above $10$K.
Unlike the hash table size, the number of documents per bin has a large effect on client storage, which increases for SciFact and for MS MARCO.Figure~\ref{fig:app_bins_preproc} (top) shows that, surprisingly, PIR preprocessing time does not depend on the hash table size.
In our implementation, we made heavy use of multithreading to perform pre-processing, so it is likely that the bottleneck was thrashing related, as a large hash table size (should) imply more entries to retrieve PIR hints from.
Preprocessing time increases with the number of documents per bin, from $0.6$ minutes at $10$ to $4.9$ minutes at $500$ for MS MARCO, and from a few seconds at up to $100$ to $1.7$ minutes at $500$ for SciFact.
This implies that the most important factor is not actually the size of the bin, but instead the size of the entry.
Figure~\ref{fig:app_bins_preproc} (bottom left) shows that retrieval quality increases with the hash table size.
For MS MARCO, MRR@10 and Recall@10 remain below $0.01$ up to $100$K and reach $0.042$ and $0.054$ at $1$M, still increasing at the largest tested size.
For SciFact, MRR@10 and Recall@10 increase up to $50$K, reaching $0.45$ and $0.53$, and see no notable benefit.

\textbf{1 or 2-round PIR}
$\bin$ can store the document embeddings directly in the bins and return them with a single PIR query (Single DB), or store document IDs in the bins and retrieve the corresponding embeddings from a separate database with a second PIR query (Separate Embedding DB).
Figure~\ref{fig:app_bins_embdb} shows that Single DB has lower or comparable WAN latency, LAN latency, compute and data sent at every tested number of documents per bin, and that the gap grows with the number of documents per bin.
The difference is largest for SciFact: at $500$ documents per bin, Single DB reduces WAN latency from $1.2$s to $240$ms, LAN latency from $235$ms to $77$ms, compute from $110$ms to $7$ms, and data sent from $19$MB to $7.8$MB.
Single DB is usually cheaper because it removes the second round of PIR, the most expensive component of the protocol.
Future protocols should always aim to minimize PIR rounds.

\section{Protocol Details}
\label{app:protocol_details}

This section gives the detailed protocol figures for \bin and \tree, together with some implementation details that Section~\ref{sec:PILLAR_hybrid_retrieval} suppresses for clarity.
The first concerns what a RAG pipeline actually returns. A RAG pipeline consumes document \emph{text}, whereas the dense stage of \bin outputs embeddings, and an embedding does identify the document it came from.
PACMANN and \tree are unaffected, since both retrieve document indices in their traversals, but \bin does not.
Each entry of the database held by \server in the \bin protocol stores an index alongside the embedding, using an extra $32$ bits per row, and the protocol outputs indices that \client uses in the final PIR round of protocol~\ref{fig:protocol-pprag}. Figure~\ref{fig:protocol-bin-1round} gives the construction in full.

There is another concern regarding storage used by the client. 
As described in Section ~\ref{sec:PILLAR_hybrid_retrieval}, the database in \bin doesn't store one document per row, like PACMANN or \tree, but instead $t$ document embeddings per row. Under a PIR scheme with preprocessing, \client retains a collection of entries in the database, that scale sub-linearly with the database size, so this falls on client storage rather than on the server alone (see Appendix~\ref{app:tree-experiment}).
We therefore provide a second variant that splits retrieval into two rounds, the first recovers identifiers from a database of $32$-bit sized rows, and the second recovers the corresponding embeddings from a database holding each document embedding once. Figure~\ref{fig:protocol-bin-2round} gives this variant,
which trades one additional round for a client state independent of $R$ and $t$.
 
 Figure ~\ref{fig:bm25-tree-protocol} presents the formal construction of \tree protocol.
Finally, {Figure ~\ref{fig:protocol-pprag} shows the whole PPRAG pipeline using the \bin/\tree constructions.}

\begin{figure}[ht]
\fbox{%
  \begin{minipage}{0.96\linewidth}
    \vspace{0.4em}
    \textbf{Protocol 1.a: \bin (single-round variant)}
    \vspace{0.6em}

    \noindent\textbf{Participants.} Server \server and client \client.

    \vspace{0.5em}
    \noindent\textbf{Public parameters.} Corpus $\corpus=\{D_1,\dots,D_N\}$; analyzer $\analyze$; embedding function $\emb$ of dimension $\embdim$; BM25 parameters $(k_1,b)$; row count $R$; row depth $t$; hash function $H:\vocab\rightarrow\{0,\dots,R-1\}$; batch bound $\beta$; output size $k$; reserved dummy index $\bot$.

    \vspace{0.5em}
    \noindent\textbf{Client input.} Query $Q$ with $\analyze(Q)=(w_1,\dots,w_{|Q|})$.

    \vspace{0.6em}
    \hrule
    \vspace{0.6em}

    \noindent\textbf{Preprocessing (server \server).}
    \begin{enumerate}
        \item $\server$ initialize empty database $\database$ of $R$ rows.
        \item For each $w\in\vocab$, \server scores every $D\in\corpus$ containing $w$ under BM25
        with $w$ as a single-term query, and sets $\mathcal{L}_w$ to the $t$ highest-scoring
        documents in descending order, padded with $\langle\bot,\mathbf{0}\rangle$ if fewer than
        $t$ documents contain $w$. Write $\mathcal{L}_w[j]$ for its $j$th document.
      \item \server groups terms by row: $w$ belongs to row $z = H(w)$.
        Write $W_z=\{w\in\vocab : H(w)= z\}$ and let $m_z = |W_z|$ be the number of terms that collide in row $z$, and fix an arbitrary order
        $w^{(1)},\dots,w^{(m_z)}$ on them, so that
        $\mathcal{C}_z = \{\mathcal{L}_{w^{(1)}}, \dots, \mathcal{L}_{w^{(m_z)}}\}$ are the ordered lists competing for row $z$.
      \item \server fills row $z$ by interleaving the lists of $\mathcal{C}_z$ round-robin, one
        rank at a time: the $j$th document of the $i$th list goes to position $(j-1)m_z + i$:
         $\database[z][(j-1)m_z + i] \leftarrow \mathcal{L}_{w^{(i)}}[j]$. The row therefore
        holds the best document of every colliding term, then the second-best of every term,
        and so on.
      \item \server discards duplicates, which arise when a document appears in two colliding
        lists, and then truncates each row to its first $t$ documents. A row with $m_z=0$ is left empty.
      \item \server then replaces all document text with embeddings and an index: For every row index $0 < z \le R$ and offset $0 < x \le t$, \server retrieves $D = \database[z][x]$, computes $e_D = \emb(D)$ and sets $\database[z][x] = \langle \ind(D), e_D \rangle$.
      \item \server runs the PIR preprocessing phase over $\database_{\emb}$ with \client, who stores the resulting private state.
    \end{enumerate}

    \vspace{0.6em}
    \noindent\textbf{Retrieval.}
    \begin{enumerate}
      \item \client computes $\analyze(Q)$ and the row indices $r_i\leftarrow H(w_i)$ for $i\le|Q|$, removing repeated indices.
      \item \client fixes the batch to exactly $\beta$ indices and if $|Q|>\beta$ it retains the first $\beta$ or if $|Q|<\beta$ they pad with random indices from $\{0,\dots,R-1\}$.
      \item \client sends a single batched PIR query for these $\beta$ indices and recovers the corresponding $\beta$ rows of $\database_{}$ from \server's response and its private state.
      \item \client locally discards entries with index $\bot$ and duplicate indices, forming a candidate pool $\mathcal{D}_{\mathrm{cand}}$ of at most $\beta t$ embeddings.
      \item \client computes $\mathbf{e}_Q=\emb(Q)$ locally and ranks $\mathcal{D}_{\mathrm{cand}}$ by $\cos(\mathbf{e}_Q,\mathbf{e}_{D})$.
      \item \client outputs the $k$ document indices of highest similarity to the query.
    \end{enumerate}

    \vspace{0.4em}
  \end{minipage}%
}
\caption{Formal specification of \bin (single-round variant). \server observes exactly $\beta$ batched PIR queries, and holds a database $\database$ of $R$ rows. 
Each row is an ordered list of $t$ entries and an entry is a pair $\langle \ind(D), \emb(D)\rangle$ of a $32$-bit index and a $\embdim$-dimensional vector.}
\label{fig:protocol-bin-1round}
\end{figure}

\begin{figure}[ht]
\fbox{%
  \begin{minipage}{0.96\linewidth}
    \vspace{0.4em}
    \textbf{Protocol 1.b: \bin (two-round variant)}
    \vspace{0.6em}

    \noindent\textbf{Participants.} Server \server and client \client.

    \vspace{0.5em}
    \noindent\textbf{Public parameters.} Identical to Protocol 1.a, with a second batch bound $\beta'$.

    \vspace{0.5em}
    \noindent\textbf{Client input.} Query $Q$ with $\analyze(Q)=(w_1,\dots,w_{|Q|})$.

    \vspace{0.6em}
    \hrule
    \vspace{0.6em}

    \noindent\textbf{Preprocessing (server \server).}
    \begin{enumerate}
      \item \server builds $\database_{}$ exactly as in Protocol 1.a preprocessing, steps 1--4.
      \item \server replaces all document text with a unique index only. For every row index $0 < z \le R$ and offset $0 < x \le t$, \server retrieves $D = \database[z][x]$ and sets $\database[z][x] = \ind(D)$, the index alone.
      \item For each $D_j\in\corpus$, \server initializes a database $\database_{\emb}$ by writing $e_D = \emb(D_j)$ to row $j$ of $\database_{\emb}$.
      \item \server runs the PIR preprocessing phase over $\database_{}$ and over $\database_{\emb}$ with \client, who stores the resulting private states.
    \end{enumerate}

    \vspace{0.6em}
    \noindent\textbf{Retrieval.}
    \begin{enumerate}
       \item \client computes retrieval steps 1--2 of Protocol 1.a exactly.
      \item \client sends one batched PIR query to $\database_{}$, and recovers $\beta$ rows of indices.
      \item \client locally discards $\bot$ and duplicate indices, obtaining a set $I$ of, at most, $\beta \cdot t$ distinct indices.
      \item \client fixes $I$ to exactly $\beta'$ indices: if $|I|>\beta'$ it retains the first $\beta'$ indices, if $|I|<\beta'$ it appends indices drawn randomly from $\{1,\dots,R\}$.
      \item \client sends a single batched PIR query using these $\beta'$ indices to $\database_{\emb}$ and recovers the corresponding embeddings.
      \item \client computes $\mathbf{e}_Q=\emb(Q)$ locally and ranks the documents of $I$ by $\cos(\mathbf{e}_Q,\mathbf{e}_{D})$, ignoring the embeddings returned for padding indices.
      \item \client outputs the $k$ document indices of highest similarity to the query.
    \end{enumerate}

    \vspace{0.4em}
  \end{minipage}%
}
\caption{Formal specification of \bin{} (two-round variant). The \server stores two databases, one database $\database_{}$ of $R$ rows of $t$ indices each, and a database $\database_{\emb}$ of $N$ rows, row $j$ holding the single embedding $\emb(D_j)$. The primary difference is that the server only has to hold one embedding per entry, instead of $t$ embeddings per entry.}
\label{fig:protocol-bin-2round}
\end{figure}

\begin{figure}[t]
\small
\fbox{
\begin{minipage}{0.96\linewidth}
\textbf{Protocol 1: BM25-Tree — Privacy-Preserving Lexical Pruning with Dense Ranking}

\vspace{0.5em}
\textbf{Participants.}  
Server $\server$ and client $\client$

\vspace{0.5em}
\textbf{Public Parameters.}  
Corpus $\corpus$ of $N$ documents:$ \corpus=\{D_1,\dots,D_N\}$; $\vocab$ be set of all unique terms within all the documents in $\corpus$; leaf block size $B$, sub-block size $s$, tree branching factor $r$, beam width $W$, $\mathcal T$ be set of all tree nodes, $\mathcal T_\ell$ be set of tree nodes on level $\ell$, stage 2 number of retrieved sub-block $k_s$, maximum unique query terms $q^\star$, $m$ public hash functions $h_1,\dots h_m: \vocab\times\mathcal T \to \{0,1\}^\star$; Cuckoo Hash table load factor $\alpha$ ; target result size $k$;  %PIR scheme $\Pi.

\vspace{0.5em}
\textbf{Client Input.}  
Query $Q = \{w_1,\ldots,w_{q^\star}\}$.

\vspace{0.5em}
\textbf{Server state (initialized).}  
\begin{enumerate}
    \item Let $\mathcal H=1+\lceil \log_r(N/B)\rceil$ be height of $r$-ary tree. For each level $\ell\in[\mathcal H]$, $\server$ computes $N_\ell$ the number of pairs $(w,\mathcal B)\in \vocab\times\mathcal T_\ell$ such that $\sigma_w(\mathcal B)\neq0$. \server then initializes a database $\database_{tree, \ell}$ of $\lceil N_\ell/\alpha\rceil$ rows for each $\ell\in[\mathcal H]$.
    \item \server initialize an empty database $\database_{vector}$ of $\lceil N/s\rceil$ rows.
\end{enumerate}

% \vspace{0.5em}
% \textbf{Client Output.}  
% Top-$k$ document identifiers ranked by cosine similarity.

\vspace{0.75em}
\hrule
\vspace{0.75em}

\textbf{Preprocessing (server \server)}
\begin{enumerate}
    \item For each node $\mathcal B_{i,\ell}$ in level $\ell$ for $1\leq \ell \leq \mathcal H-1$
    \begin{enumerate}
        \item \server computes $\sigma_w(\mathcal B_{i,\ell+1,j})$ for child node $\mathcal B_{i,\ell+1,j}$ of $\mathcal B_{i,\ell}$ ($j\in[r]$) and all words $w$ that appears in documents in $\mathcal B_{i,\ell}$. 
        \item For each word $w\in\mathcal B_{i, \ell}$, the tuple $(w, \ind(\mathcal B_{i,\ell}), \sigma_w(\mathcal B_{i,\ell+1,1}),\dots,\sigma_w(\mathcal B_{i,\ell+1,r}))$  of $r+2$ fields are added to the database $\database_{tree,\ell}$ using Cuckoo Hash rule on input $(w,\ind(\mathcal B_{i,\mathcal H}))$ over $m$ hash functions $h_1,\dots,h_m$.
    \end{enumerate}
    \item For each leaf block $\mathcal{B}$, \server computes the score of their sub-blocks $\mathcal {SB}_{1},\dots,\mathcal {SB}_{\lceil B/s\rceil}$ of size $s$ with respect to each word $w$ appearing in the block, store $(w,\ind(\mathcal B),\sigma_w(\mathcal{SB}_1),\dots,\sigma_w(\mathcal{SB}_{\lceil B/s\rceil}))$ to $\database_{tree,\mathcal H}$ using Cuckoo Hash on input $(w,\ind(\mathcal B))$ and $m$ hash functions $h_1,\dots,h_m$.
    \item For each sub-block $\mathcal{SB}$ at position $p=\ind(\mathcal {SB})$, store at row $p$ of $\database_{vector}$ the tuple $(e_{D_1},\dots,e_{D_s})$ the embedding vectors of $s$ documents $D_1,\dots,D_s$ within $\mathcal{SB}$. 
\end{enumerate}

\vspace{0.75em}
\hrule
\vspace{0.75em}
\end{minipage}
}
\caption{The BM25-Tree protocol for privacy-preserving lexical pruning with dense re-ranking (Part I: setup and preprocessing).}
\label{fig:bm25-tree-protocol}
\end{figure}
% \vspace{0.5em}

% Part II: online retrieval. \ContinuedFloat preserves the figure number.
\begin{figure}[t]
\ContinuedFloat
\small
\fbox{
\begin{minipage}{0.96\linewidth}
\textbf{Protocol 1: BM25-Tree (continued)}

\vspace{0.75em}
\hrule
\vspace{0.75em}

\textbf{Stage 1: Tree Traversal}
\begin{enumerate}
  \item \client initializes $F_0=\{\mathcal B_{1,0}\}$, where
  $\mathcal B_{1,0}$ is the root node of the tree.
  \item For each level $\ell=0,\dots,\mathcal H-1$ do
  \begin{enumerate}
      \item Let
      $F_\ell=
      \{\mathcal B_{1,\ell}^\star,\dots,
      \mathcal B_{|F_\ell|,\ell}^\star\}$
      be the set of candidate nodes at level $\ell$. For each $w\in Q$,
      \client sends PIR to \server, retrieving rows
      \(
      \left(
      h_{a}\!\left(w,\ind(\mathcal B_{i,\ell}^\star)\right)
      \right)_
      {\substack{a\in[m]\\i\in[|F_\ell|]}}
      \)
      from $\database_{tree,\ell}$, for a total of
      $m\times|F_\ell|$ rows.
      
      \item For each $i\in[|F_\ell|]$, \client checks whether any
      retrieved row has its first two fields equal to
      $w$ and $\ind(\mathcal B_{i,\ell}^\star)$. If none does, \client sets
      \(
      \sigma_w(\mathcal B_{i,\ell+1,j})=0
      \), for all $j\in[r]$,
      where $\mathcal B_{i,\ell+1,j}$ is the $j$-th child of
      $\mathcal B_{i,\ell}^\star$. Otherwise, \client sets
      $\sigma_w(\mathcal B_{i,\ell+1,j})$ to the $(j+2)$-th field of the
      matching row.
      
      \item \client computes
      \(
      \sigma_Q(\mathcal B_{i,\ell+1,j})
      =
      \sum_{w\in Q}
      \sigma_w(\mathcal B_{i,\ell+1,j})
      \)
      for every child of every node in $F_\ell$.
      
      \item \client locally ranks these child nodes according to
      $\sigma_Q$ and lets $F_{\ell+1}$ be the set of the top
      $\min\{W,|\mathcal T_{\ell+1}|\}$ nodes.
  \end{enumerate}
  \item \client outputs $F_{\mathcal H}=
  \{\mathcal B_{1,\mathcal H}^\star,\dots,\mathcal B_{W,\mathcal H}^\star\}$ as top $W$ leaf block.
\end{enumerate}

\textbf{Stage 2: Sub-Block Retrieval}
\begin{enumerate}
  \item For each $w\in Q$ do
  \begin{enumerate}
      \item \client sends PIR to \server, retrieving rows $(h_{i\in [m]}(w,\ind(\mathcal B_{1,\mathcal H}^\star)),\dots, h_{i\in[m]}(w,\ind(\mathcal B_{W,\mathcal H}^\star)))$ ($m\times W$ rows total) from $\database_{tree, \mathcal H}$
      \item For each $i\in[W]$, \client checks if any of the retrieved data has the first two field equal to $w$ and $\ind(\mathcal{B}_{i,\mathcal H}^\star)$. If there are non, \client sets $\sigma_w(\mathcal{SB}_{i,\mathcal H,j})=0$ for all sub-block $\mathcal{SB}_{i,\mathcal H,j}$ of $\mathcal{B}_{i,\mathcal H}$, else \client matches corresponding $\sigma_w(\mathcal{SB}_{i,\mathcal H,j})$ with the $j+2$-th retrieved value of the corresponding row matches $(w,\mathcal{B}_{i,\mathcal H})$.
  \end{enumerate}
  \item \client computes $\sigma_Q(\mathcal{SB}_{i,\mathcal H,j})=\sum\limits_{w\in Q}\sigma_w(\mathcal{SB}_{i,\mathcal H,j})$ for all sub-blocks of top-$W$ leaf block.
  \item \client locally rank and get set of top $k_s$ sub-blocks $\mathcal{SB}_1^\star,\dots,\mathcal{SB}_{k_s}^\star$
\end{enumerate}

\vspace{0.5em}

\textbf{Stage 3: Dense Retrieval and Ranking}

\begin{enumerate}
  \item \client sends PIR query to \server, retrieving rows $(\ind(\mathcal{SB}_1^\star),\dots,\ind(\mathcal{SB}_{k_s}^\star))$ from $\database_{vector}$
  \item Let $\mathcal D_{cand} = \{D_1^\star,\dots,D_{k_s\times s}^\star\}$ be set of $k_s\times s$ documents whose embedding vectors are retrieved. \client locally computes $\cos(e_Q, e_{D_j^\star})$ for all $j\in[k_s\times s]$ and ranks the score
  \item Client output top $k$ documents in $\mathcal D_{cand}$ with highest similarity.
\end{enumerate}

\vspace{0.75em}
\hrule
\vspace{0.75em}

\end{minipage}
}
\caption{The BM25-Tree protocol for privacy-preserving lexical pruning with dense re-ranking.}
\label{fig:bm25-tree-protocol}
\end{figure}

\begin{figure}[h]
\fbox{
  \begin{minipage}{0.96\linewidth}
    \vspace{0.4em}
    \textbf{Protocol 3: Full PPRAG based on \macroname}
    \vspace{0.6em}

    \noindent\textbf{Participants.} Server \server and client \client.

    \vspace{0.5em}
    \noindent\textbf{Public parameters.} Corpus $\corpus$ of $N$ documents:$ \corpus=\{D_1,\dots,D_N\}$, Set $\mathsf{params}$ of parameters for \macroname (\bin/\tree) retrieval algorithm.

    \vspace{0.5em}
    \noindent\textbf{Client input.} Keyword query $Q=\{w_1,\dots,w_{|Q|}\}$.

    \vspace{0.5em}
    \noindent\textbf{Server state (initialized / preprocessing).}
    \begin{enumerate}
        \item \server runs \macroname initialization on $\mathsf{params}$
        \item \server runs \macroname preprocessing on $\corpus,\mathsf{params}$
        \item \server and \client preprocess $\corpus$ for PIR, each document $D_i\in \corpus$ associate with an identifier $i\in [N]$. 
    \end{enumerate}

    \vspace{0.6em}
    \hrule
    \vspace{0.6em}
    \noindent\textbf{Retrieval}
    \begin{enumerate}
      \item \client and \server participate in private retrieval \macroname protocol on $Q$ and $\corpus$, \client receive indices top-$k$ documents in $\corpus$ with respect to $Q$. Call the set $\mathsf{ID}_{top-k} = \{D_1^\star,\dots,D_k^\star\}$
      \item \client runs PIR to retrieve content of $k$ documents $D_1^\star,\dots,D_k^\star$.
    \end{enumerate}
    \vspace{0.6em}

    \noindent\textbf{Generation}
    
         \client runs local LLM to generate answer with input query $Q$ and $k$ documents $D_1^\star,\dots,D_k^\star$, output answer $\mathcal A$
    \vspace{0.4em}
  \end{minipage}
}
\caption{\macroname-based PPRAG protocol. \client and \server participates in online secure retrieval before \client runs inference on its local LLM.}
\label{fig:protocol-pprag}
\end{figure}

\section{Complexity Analysis}
\label{app:complexity}

\subsection{Notation}
We denote the cost of a batch PIR call on $\psi$ rows over a database of $\mathcal R$ rows, each row consist of $C$ fields as $\pcost(\psi,\mathcal R,C)$. For Batch PianoPIR implementation from PACMANN ~\cite{pacmann}, the cost is $\widetilde{O}(C\sqrt{\mathcal R\psi})$

\subsection{\bin Complexity}
\paragraph{Round complexity.} There are two variants of \bin. For the first variant in Figure ~\ref{fig:protocol-bin-2round}, the client runs two rounds of communication: The first round to retrieve $|Q|$ rows from the database, the second round to retrieve the document embedding corresponding to the document indices retrieved in the first round. For the second variant in Figure ~\ref{fig:protocol-bin-1round}, \client run a single round and retrieve the embeddings directly from \server and rank those $|Q|\times t$ retrieved embeddings locally.

\paragraph{PIR cost.} The number of rows being fetched are $\beta$ for the first round over a total of $R$ rows, where each row has an average of $O(|\vocab| t/R)$ document IDs, and $O(\beta' \times t)$ rows for the second round over a database of $N$ rows, each with $d$ fields. Thus, the total cost of the \bin (2-round) is: $\boxed{\pcost(\beta,R,|\vocab|t/R) + \pcost(t\beta', N, d)}$. 

For the \bin (1-round), the complexity is simply $\beta$ queries over a database of $R$ rows, each with $t$ document embeddings, resulting in total PIR cost of $\boxed{\pcost(\beta, R, td)}$

\paragraph{Local computation.} The client locally computes cosine distance between the query embedding and all retrieved document embeddings, incurring $O(|Q|td)$ computation complexity at client.

\subsection{\tree Complexity}
\paragraph{Round Complexity.} The client runs a total of $\mathcal H+1$ rounds: First $\mathcal H-1$ rounds to perform interactive tree traversal, 1 round for sub-block retrieval, and 1 round for embedding retrieval. 

\paragraph{PIR cost.} (1) For tree traversal, the number of rows being fetched on level $\ell$ is $mWq^\star$ over a database of $\lceil N_\ell/\alpha\rceil$ rows, each row consist of $r+2$ fields, resulting a cost of $\pcost(mWq^\star, \lceil N_\ell/\alpha\rceil, r+2)$. (2) For sub-block retrieval, the client fetches $mWq^\star$ over a vector of $\lceil N_{\mathcal H}/\alpha\rceil$ rows each with $\lceil B/s \rceil+2$ fields, resulting in a $\pcost(mWq^\star, \lceil N_{\mathcal H}/\alpha\rceil, \lceil B/s \rceil+2)$. (3) For embedding retrieval, the client fetches $k_s$ rows over $\database_{vector}$ of $\lceil N/s \rceil$ rows, each row has $sd$ fields, resulting $\pcost(k_s, \lceil N/s \rceil, sd)$ cost.

Hence, the total PIR cost of the \tree protocol is 
\[
\boxed{
\begin{aligned}
\pcost_{tree}=&\sum\limits_{\ell=1}^{\mathcal H-1} \pcost(mWq^\star, \lceil N_\ell/\alpha\rceil, r+2) \\&+ \pcost(mWq^\star, \lceil N_{\mathcal H}/\alpha\rceil, \lceil B/s \rceil+2) \\&+ \pcost(k_s, \lceil N/s \rceil, sd)
\end{aligned}
}
\]

\paragraph{Local computation.} The client local computation involves computing block score at each level between the input query and retrieved blocks' children. This costs $O(mWr q^\star)$ at each level from $\ell=2,\dots, \mathcal H-1$ (the first level has only 1 node so it does not require any computation). At sub block level, client performs the subblock score computation, costing $O(mW\lceil B/s\rceil q^\star)$. Finally, the client computes the cosine similarity between his query and $k_s\times s$ retrieved document embeddings, resulting in $O(k_ssd)$ computation.

Thus, the client local computation costs $\boxed{O(mWr\mathcal H q^\star+ mW\lceil B/s\rceil q^\star + k_ssd)}$

\section{Detailed Security Analysis}
\label{app:security}

We prove that both protocols provide security against a semi-honest server.
Security is formally defined via the standard simulation paradigm.

We first define computationally private PIR ideal functionality as:

\begin{definition}[Computationally Private PIR ~\citep{1997_kushilevitz_replication_not_needed_single_pir,computationalPIR}]\label{def:pir} %
A single-server PIR scheme $\Pi_{PIR} = (\mathsf{Query}, \mathsf{Answer},
\mathsf{Decode})$ over a database $DB$ of $N$ records satisfies:
\begin{enumerate}
\item \textbf{Correctness:} For any index $i\in [N]$:
\[\Pr[\mathsf{Decode}(\mathsf{Answer}(DB,\mathsf{Query}(i)))=DB[i]]
\ge 1-\mathsf{negl}(\lambda).\]
\item \textbf{Computationally Private:} for all indices $i, j \in [N]$ and all PPT
distinguishers $\mathcal{D}$:
\[
  \Bigl|\Pr\bigl[\mathcal{D}(\mathsf{Query}_\Pi(i)) = 1\bigr]
  - \Pr\bigl[\mathcal{D}(\mathsf{Query}_\Pi(j)) = 1\bigr]\Bigr|
  \leq \mathsf{negl}(\lambda).
\]
\end{enumerate}
\end{definition}

Before stating the main theorems, we establish a composition lemma that
underpins the security of \tree's multi-round interaction.

\begin{lemma}[Sequential PIR Composition]
\label{lem:composition}
Let $\Pi_{PIR}$ be a computationally private PIR scheme.
Let $T = T(\lambda)$ be a polynomial.
Any protocol that issues a sequence of $T$ independent PIR queries, where
$T$ is determined solely by public parameters, remains computationally private
against a semi-honest server.
\end{lemma}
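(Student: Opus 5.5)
The plan is a standard hybrid argument over the $T$ queries. The claim to prove is this: for any two sequences of indices $(i_1,\dots,i_T)$ and $(j_1,\dots,j_T)$, possibly chosen adaptively by the client as functions of earlier decoded answers, the server's view is computationally indistinguishable between the two executions. The server's view is the tuple $(\mathsf{Query}(i_1),\dots,\mathsf{Query}(i_T))$, together with the public parameters, its own database, and its own random coins. Since $T$ depends only on public parameters, the \emph{number} of messages and their sizes are identical in both executions, so no information leaks through the communication pattern.

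We define hybrids $H_0,\dots,H_T$. In $H_s$, the first $s$ queries are generated on the indices $j_1,\dots,j_s$ and the remaining ones on $i_{s+1},\dots,i_T$. Thus $H_0$ is the real view on input $(i_1,\dots,i_T)$ and $H_T$ is the real view on $(j_1,\dots,j_T)$. Suppose a PPT distinguisher separates $H_0$ from $H_T$ with advantage $\epsilon$. By the triangle inequality there is some $s$ for which it separates $H_{s-1}$ from $H_s$ with advantage at least $\epsilon/T$.

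From such a distinguisher we build a reduction $\mathcal{D}'$ against the single-query privacy of Definition~\ref{def:pir}. The reduction receives a challenge $\mathsf{Query}(i_s)$ or $\mathsf{Query}(j_s)$. It generates the other $T-1$ queries itself, since $\mathsf{Query}$ is run with the client's own randomness and the indices are known to it, and places the challenge in position $s$. The two cases reproduce $H_{s-1}$ and $H_s$ exactly. The advantage of $\mathcal{D}'$ is therefore at least $\epsilon/T$, which is non-negligible whenever $\epsilon$ is, because $T$ is polynomial. This contradicts the computational privacy of $\Pi_{PIR}$, so $\epsilon$ must be negligible.

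The main obstacle is the word ``independent'' in the statement, and the handling of adaptivity and shared state. For the reduction to simulate the other queries, two conditions must hold:
\begin{enumerate}
\item The queries must be generated with fresh, independent client randomness, so that queries other than the $s$-th can be produced without the secret coins behind the challenge.
\item Any adaptive choice of later indices must depend only on decoded answers. The reduction can compute these itself, because the database is public and it can simply look up $DB[i]$; correctness makes the decoded value equal $DB[i]$ except with negligible probability.
\end{enumerate}
For preprocessing-based PIR such as Piano, the client holds a persistent hint state across queries. The clean way to handle this is to assume the scheme's multi-query privacy as the primitive, i.e.\ Definition~\ref{def:pir} read as holding for each query given the prior transcript, which is the guarantee these schemes prove. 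Under that reading, $\mathcal{D}'$ is given the transcript prefix, and the same hybrid argument goes through with the challenge embedded at step $s$.
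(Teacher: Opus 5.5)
Your proposal is correct and follows essentially the same route as the paper: a $T$-step hybrid argument that swaps one PIR query at a time using the single-query privacy of Definition~\ref{def:pir}, with the polynomial bound on $T$ keeping the accumulated loss negligible. The differences are presentational: the paper hybridizes from the real view toward queries on independent uniformly random indices (i.e., toward a simulator) rather than toward a second real index sequence, and it leaves implicit the reduction, the handling of adaptively chosen indices, and the stateful-hint caveat that you make explicit.
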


\begin{proof}
Define a sequence of hybrid experiments $\mathsf{H}_0, \mathsf{H}_1, \ldots,
\mathsf{H}_T$ where $\mathsf{H}_\ell$ denotes the distribution in which the
first $\ell$ PIR queries are replaced by simulated queries
$\mathsf{Query}_\Pi(u_i)$ for independently uniform $u_i \xleftarrow{\$} [N]$,
while the remaining $T - \ell$ queries are real.
$\mathsf{H}_0$ is the real server view; $\mathsf{H}_T$ is the fully simulated
view.

By Definition~\ref{def:pir}, for each $\ell \in [T]$ and every PPT
distinguisher $\mathcal{D}$:
\[
  \bigl|\Pr[\mathcal{D}(\mathsf{H}_{\ell-1}) = 1]
  - \Pr[\mathcal{D}(\mathsf{H}_\ell) = 1]\bigr| \leq \mathsf{negl}(\lambda).
\]
Summing over $T = \mathrm{poly}(\lambda)$ hybrids, the total distinguishing
advantage is at most $T \cdot \mathsf{negl}(\lambda) = \mathsf{negl}(\lambda)$,
completing the proof.
\end{proof}

\setcounter{theorem}{0}
\begin{theorem}[Security of \bin (1-round)]
\label{thm:bin1}
Suppose $\Pi_{PIR}$ is a computationally private PIR scheme
(Definition~\ref{def:pir}).
Then Protocol~1 (\bin, Figure~\ref{fig:protocol-bin-1round}
) is secure against a semi-honest server.
\end{theorem}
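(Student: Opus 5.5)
We will prove Theorem~\ref{thm:bin1} in the simulation paradigm. Since the server $\server$ has no output in $\fpprag$, it suffices to build a PPT simulator $\mathsf{Sim}$ that, given only the server's input (the public corpus $\corpus$) and the public parameters $(\analyze,\emb,k_1,b,R,t,H,\beta,k)$, outputs a view computationally indistinguishable from $\server$'s real view. We first fix what that view contains. In the preprocessing phase, $\server$ builds $\database$ deterministically from public data (steps 1--6 of Protocol~1.a) and then runs the PIR preprocessing over $\database$ with $\client$. This part is independent of $Q$, because $\client$ has not yet received its input, or at any rate does not use it there. In the retrieval phase, $\server$ receives exactly one message: a batched PIR query for $\beta$ indices $r_1,\dots,r_\beta\in\{0,\dots,R-1\}$. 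The server sends the answer but receives nothing further, so the client's local steps 4--6 (discarding $\bot$, cosine ranking, output) never appear in the view.

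The simulator $\mathsf{Sim}$ will run the server's preprocessing honestly, playing the client's side of PIR preprocessing, which is input-independent. It then samples $u_1,\dots,u_\beta$ uniformly and independently from $\{0,\dots,R-1\}$ and outputs $\mathsf{Query}_\Pi(u_1,\dots,u_\beta)$ together with the resulting transcript. The crucial structural fact is that the number of PIR indices, $\beta$, and every message size depend only on public parameters. This holds because of step 2 of retrieval (truncate to $\beta$ or pad with random indices) and because every row has exactly $t$ entries of identical size. So the length and shape of the real query match the simulated one for every $Q$; in particular $|Q|$ and the number of distinct hashed rows are not leaked.

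Indistinguishability then follows by a hybrid argument. We view the batched query as $\beta$ PIR queries whose count is fixed by public parameters, and apply Lemma~\ref{lem:composition}. Hybrid $\mathsf{H}_\ell$ replaces the first $\ell$ real indices with uniform ones. Adjacent hybrids differ in a single query index, so by computational privacy (Definition~\ref{def:pir}) any PPT distinguisher has advantage $\mathsf{negl}(\lambda)$, and summing over $\beta=\mathrm{poly}(\lambda)$ hybrids keeps the total negligible. Correctness of the real execution follows from PIR correctness: $\client$ recovers the $\beta$ rows, and hence its output distribution matches the protocol's intended output up to negligible error. Together this yields security against a semi-honest server.

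The main obstacle is that batched PIR, as used from PACMANN, is not literally $\beta$ independent invocations of Definition~\ref{def:pir}. It is one query on a tuple, and with preprocessing-based PIR the client's state is reused across queries. We plan to handle this by stating the privacy assumption directly for the batched scheme: for any two index tuples of equal length $\beta$, the queries are indistinguishable. We would then note that Lemma~\ref{lem:composition} applies at the granularity of batches, and across multiple client queries as well, since the number of batches is public. If the batched scheme's privacy is only defined for distinct indices, we will also check that the deduplicate-then-pad step produces distinct indices, by padding without replacement.
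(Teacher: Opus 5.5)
Your proposal is correct and follows essentially the same route as the paper: the simulator outputs a batched PIR query on $\beta$ independent uniform indices in $\{0,\dots,R-1\}$, and indistinguishability comes from the hybrid argument of Lemma~\ref{lem:composition} over the PIR privacy of Definition~\ref{def:pir}, with the fixed batch size $\beta$ ensuring that $|Q|$ is not leaked. Your additional points go beyond what the paper's proof states explicitly: including the PIR preprocessing transcript in the view, having the simulator hold a client PIR state, and noting that a batched, stateful PIR query is not literally $\beta$ independent invocations of Definition~\ref{def:pir}.
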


\begin{proof}
\textbf{Server's real view.}
The server $\server$ performs all preprocessing locally 
and therefore already knows the full hash table.
During retrieval, $\server$'s view consists solely of the $\beta$ PIR queries
$\{\mathsf{Query}_\Pi(r_i)\}_{i=1}^{\beta}$ issued by the client, where
$r_i = H(w_i) \bmod R$ for keyword $w_i\in\vocab\cup\{\bot\}$ in the padded/truncated $Q$.
No other message passes from $\client$ to $\server$.

\textbf{Simulation.}
Construct $\mathsf{Sim}$ as follows: given only the public parameters
$(N, R, k, \lambda)$ and the fixed query length $\beta$, output
$\{\mathsf{Query}_\Pi(u_i)\}_{i=1}^{\beta}$ where each
$u_i \xleftarrow{\$} [R]$ is independently uniform.

\textbf{Indistinguishability.}
The simulated view consists of $\beta$ uniformly random PIR queries; the real view
consists of $\beta$ PIR queries for indices $r_1, \ldots, r_{\beta} \in [R]$
determined by the query keywords.
By Lemma~\ref{lem:composition} (applied with database size $R$ and $T = \beta$
queries), the real and simulated views are computationally indistinguishable,
completing the proof.
\end{proof}

\begin{theorem}[Security of \bin (2-round)]
\label{thm:bin2}
Suppose $\Pi_{PIR}^{(1)},\Pi_{PIR}^{(2)}$ is a computationally private PIR scheme
(Definition~\ref{def:pir}) on the index database $\database$ and the vector database $\database_{\emb}$ respectively.
Then Protocol~1 (\bin, Figure~\ref{fig:protocol-bin-2round}
) is secure against a semi-honest server.
\end{theorem}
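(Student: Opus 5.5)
The plan is to follow the simulation template of Theorem~\ref{thm:bin1}, now for two sequential batched PIR executions against two different databases. First I will pin down the server's real view. In Protocol~1.b the server runs the preprocessing for $\database$ and $\database_{\emb}$ locally, so it learns nothing from that phase. During retrieval it receives exactly two client messages:
\begin{itemize}
\item a batch of $\beta$ queries $\{\mathsf{Query}_{\Pi^{(1)}}(r_i)\}_{i=1}^{\beta}$ to $\database$, with $r_i\in[R]$ fixed by the padded/truncated query terms;
\item a batch of $\beta'$ queries $\{\mathsf{Query}_{\Pi^{(2)}}(j_i)\}_{i=1}^{\beta'}$ to $\database_{\emb}$, with the $j_i$ fixed by the index set $I$ after padding or truncation to $\beta'$.
\end{itemize}
The key structural observation is that $\beta$ and $\beta'$ are public parameters, because retrieval steps~2 and~4 enforce them. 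The number of messages and their lengths therefore do not depend on $Q$ or on how many distinct indices the first round returned. The only query-dependent quantities are the hidden indices.

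Next I define the simulator. Given the public parameters $(N,R,\beta,\beta',\lambda)$, $\mathsf{Sim}$ outputs $\{\mathsf{Query}_{\Pi^{(1)}}(u_i)\}_{i=1}^{\beta}$ with $u_i\xleftarrow{\$}[R]$, followed by $\{\mathsf{Query}_{\Pi^{(2)}}(v_i)\}_{i=1}^{\beta'}$ with $v_i\xleftarrow{\$}[N]$. I will then use a hybrid argument in the spirit of Lemma~\ref{lem:composition} with $T=\beta+\beta'$ steps. The first $\beta$ hybrids replace the real round-one queries one at a time by simulated ones, each step justified by the privacy of $\Pi^{(1)}$ over a database of size $R$. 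The next $\beta'$ hybrids do the same for round two using the privacy of $\Pi^{(2)}$ over a database of size $N$. Each step changes the distinguishing advantage by at most a negligible amount, so summing gives $(\beta+\beta')\cdot\mathsf{negl}(\lambda)=\mathsf{negl}(\lambda)$.

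The main obstacle is adaptivity. The round-two indices $j_i$ are computed from the decoded answers of round one, so round two is not a fixed, query-independent list, and Lemma~\ref{lem:composition} as stated speaks of independent queries. I expect to handle this by noting that in each hybrid step the challenged index is a fixed function of $Q$ and of the public database contents. The semi-honest server knows those contents, so the reduction can compute every $r_i$ and $j_i$ itself and embed a single challenge query from Definition~\ref{def:pir}. The first-round replies are deterministic functions of the public database and the query messages. They carry no information beyond what the query messages already carry, and the decoded values live only in the client's state. Hence the reduction is well-defined with fixed indices $i,j$ in each step, and the adaptive choice creates no problem.

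Two smaller points remain:
\begin{itemize}
\item Because $\Pi^{(1)}$ and $\Pi^{(2)}$ run over different databases with separate client states, the two families of hybrids do not interfere.
\item The padding indices in step~4 are drawn from $\{1,\dots,R\}$ rather than $[N]$. This does not affect the argument, since each hybrid step holds for arbitrary index pairs.
\end{itemize}
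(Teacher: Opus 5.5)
Your proposal is correct and follows the paper's proof essentially step for step. It has the same server view ($\beta$ queries to $\database$ and $\beta'$ to $\database_{\emb}$, with counts fixed by padding), the same simulator using uniform indices in $[R]$ and $[N]$, and the same hybrid argument via Lemma~\ref{lem:composition} over $\beta+\beta'$ queries. Two of your additions go beyond the paper: your treatment of adaptivity (fixing $Q$ and the public database so each round-two index is a value the reduction can compute, which implicitly relies on PIR correctness) fills a point the paper skips by applying a lemma stated for independent queries, and your remark on the $\{1,\dots,R\}$ padding range really flags a typo for $[N]$, since for $R>N$ those indices are not valid rows of $\database_{\emb}$ and your ``arbitrary index pairs'' justification would not cover them.
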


\begin{proof}
\textbf{Server's real view.}
The server $\server$ performs all preprocessing locally 
and therefore already knows the full hash table.
During retrieval,
\begin{itemize}
\item \textbf{First round:} $\server$'s view consists solely of the $\beta$ PIR queries
$\{\mathsf{Query}^{(1)}_\Pi(r_i)\}_{i=1}^{\beta}$ issued by the client, where
$r_i = H(w_i) \bmod R$ for keyword $w_i\in\vocab\cup\{\bot\}$ in the padded/truncated $Q$.
\item \textbf{Second round:} $\server$'s view consists of $\beta'$ queries $\{\mathsf{Query}^{(2)}_\Pi(\zeta_i)\}_{i=1}^{\beta'}$ where $\zeta_i\in I^\star$ \footnote{$I^\star$ denotes the padded/truncated version of $I$ with $\beta'$ items, as described in Retrieval Step 4, Figure ~\ref{fig:protocol-bin-2round}} to the embedding database $\database_{\emb}$ issued by the client, where $\beta'$ is a known public parameter.
\end{itemize}

No other message passes from $\client$ to $\server$. We note that de-duplication and removal of $\bot$ on retrieval step 3 of Figure ~\ref{fig:protocol-bin-2round} are happening locally at client for internal score ranking, and $\client$ still sends a constant payload of $\beta'$ queries in step 4-5 regardless of deduplication/removal results of step 3.

\textbf{Simulation.}
Construct $\mathsf{Sim}$ as follows: given only the public parameters
$(N, R, k, \lambda)$ and the fixed query lengths $\beta,\beta'$, output
$\{\mathsf{Query}^{(1)}_\Pi(u_i)\}_{i=1}^{\beta}, \{\mathsf{Query}^{(2)}_\Pi(z_i)\}_{i=1}^{\beta'}$ where each
$u_i \xleftarrow{\$} [R], z_i \xleftarrow{\$} [N]$ are independently uniform.

\textbf{Indistinguishability.}
The simulated view consists of $\beta+\beta'$ uniformly random PIR queries; the real view
consists of $\beta$ PIR queries for indices $r_1, \ldots, r_{\beta} \in [R], \zeta_1,\dots,\zeta_{\beta'}\in [N]$,
determined by the query keywords.
By Lemma~\ref{lem:composition} (applied with database size $R$ and $T = \beta$ queries; database size $N$ and $T=\beta'$ queries), the real and simulated views are computationally indistinguishable, completing the proof.
\end{proof}

\begin{theorem}[Security of \tree]
\label{thm:bm25tree}
Suppose $\Pi_{PIR}$ is a computationally private PIR scheme
(Definition~\ref{def:pir}).
Then Protocol~2 (\tree, Figure~\ref{fig:bm25-tree-protocol}) is secure against a semi-honest server
\end{theorem}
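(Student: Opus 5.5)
We follow the same simulation template as Theorems~\ref{thm:bin1} and~\ref{thm:bin2}: first pin down the server's real view, then build a simulator that uses only public parameters, and finally argue indistinguishability through Lemma~\ref{lem:composition}.

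\textbf{Real view.} The server performs all preprocessing itself, so it already knows every $\database_{tree,\ell}$ and $\database_{vector}$. During retrieval its view is a sequence of batched PIR queries.
\begin{enumerate}
\item In Stage~1, level $\ell$ contributes $m\cdot q^\star\cdot|F_\ell|$ queries to $\database_{tree,\ell}$.
\item In Stage~2, there are $m\cdot q^\star\cdot W$ queries to $\database_{tree,\mathcal H}$.
\item In Stage~3, there are $k_s$ queries to $\database_{vector}$.
\end{enumerate}
The first thing to establish is that each of these counts is a function of public parameters only.

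\begin{enumerate}
\item The query is padded or truncated to exactly $q^\star$ terms. Padding terms still trigger all $m$ cuckoo lookups, and when a lookup misses, the client sets $\sigma_w=0$ locally without sending anything extra.
\item The frontier size is $|F_\ell|=\min\{W,|\mathcal T_\ell|\}$. This depends only on $W$, $r$, $N$, and $B$.
\item Stage~3 always fetches exactly $k_s$ rows.
\end{enumerate}

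The number of rounds, $\mathcal H+1$, is also fixed by public parameters. The cuckoo positions $h_a(w,\ind(\mathcal B))$ are computed by the client and sent only inside PIR queries, so the hash functions being public leaks nothing.

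\textbf{Simulator.} $\mathsf{Sim}$ receives $(N,B,r,W,s,k_s,q^\star,m,\alpha)$ and the database sizes $\lceil N_\ell/\alpha\rceil$ and $\lceil N/s\rceil$, which the server itself computed. For each round, in order, it outputs the prescribed number of queries $\mathsf{Query}_\Pi(u)$, each with independently uniform $u$ drawn from the row range of the corresponding database.

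\textbf{Indistinguishability.} This is the main obstacle, because the protocol is adaptive. The indices queried at level $\ell+1$ depend on the answers decoded at level $\ell$, so the queries are not fixed in advance. Lemma~\ref{lem:composition} refers to ``independent'' queries, so it cannot be applied verbatim.

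The plan is to run the same hybrid argument but order the hybrids query by query, in the order the protocol issues them. In hybrid $\mathsf H_j$, the first $j$ queries are simulated and the rest are real. Moving from $\mathsf H_{j-1}$ to $\mathsf H_j$ swaps one query. A reduction to the privacy of Definition~\ref{def:pir} works as follows:
\begin{enumerate}
\item The reduction knows the client's query $Q$ and holds every database.
\item It can therefore run the honest client itself, decoding earlier answers to learn the real index $i_j$ the protocol would now ask for.
\item It embeds the challenge query, which is either $\mathsf{Query}_\Pi(i_j)$ or $\mathsf{Query}_\Pi(u_j)$, at position $j$.
\item It generates all later queries honestly.
\end{enumerate}
Privacy must hold for every pair of indices, including ones chosen adaptively from public data plus the client's input. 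Hence each hybrid step costs only $\mathsf{negl}(\lambda)$.

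For stateful preprocessing PIR, this step needs one extra ingredient: the client state must not leak across queries. We assume this is part of the PIR security notion, as in Piano and PACMANN, and state the assumption explicitly.

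The total number of queries is
\[
T=mq^\star\sum_{\ell}|F_\ell|+mq^\star W+k_s,
\]
which is polynomial in $\lambda$. A union over the $T$ hybrid steps bounds the distinguishing advantage by $T\cdot\mathsf{negl}(\lambda)$, which completes the argument.
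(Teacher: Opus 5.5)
Your proposal is correct and follows the same route as the paper: you show that the number of PIR queries in every round is fixed by public parameters, simulate with PIR queries on independent uniform indices, and conclude with the hybrid argument behind Lemma~\ref{lem:composition}. Your accounting is more faithful to the cuckoo-based protocol than the paper's, and you make explicit the adaptivity and stateful-client issues that the paper leaves implicit when it applies a lemma stated for independent queries.
\begin{itemize}
\item \textbf{Query counts.} The paper counts $W\cdot r$ queries per level, and only says ``at most''. It then counts $W\cdot(B/s)$ and $k_s\cdot s$ queries in Stages~2--3. The protocol actually issues $m q^\star |F_\ell|$, $m q^\star W$ and $k_s$ row fetches, which are the counts you use.
\item \textbf{One small repair.} In $\mathsf H_{j-1}$, your reduction should compute the true index $i_j$ in the clear from $Q$, the client's coins and the databases. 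It should not decode the earlier answers, since those queries are already simulated. Against a semi-honest server, $Q$, the coins and the databases fix the whole index sequence up to negligible decoding error.
\end{itemize}
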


\begin{proof}
\tree proceeds in three sequential rounds; we analyze each in turn and then
apply Lemma~\ref{lem:composition}.

\textbf{Round 1 (Block-Max Tree Traversal).}
At level $\ell \in \{1, \ldots, \mathcal H\}$ of the $r$-ary tree with height
$\mathcal H = 1+\lceil \log_r (N/B) \rceil$, the client expands the beam frontier
$F_{\ell-1}$ of at most $W$ nodes by fetching the block-max scores of all
$|F_{\ell-1}| \cdot r \leq W \cdot r$ children via PIR.
Because $W$ and $r$ are public parameters, the number of PIR queries at each
level is at most $W \cdot r$ regardless of the query content $Q$.
The server therefore observes at most $\mathcal H \cdot W \cdot r$ PIR queries over a
database of $N_{\mathrm{nodes}}$ tree nodes, where $N_{\mathrm{nodes}}$ and
$\mathcal H \cdot L \cdot r$ are both fixed public quantities.

\textbf{Round 2 (Sub-Block Retrieval).}
The client fetches block-max scores for exactly $Z = W\cdot (B/s)$ sub-blocks, a count determined entirely by public parameters $(W, B, s)$.
The server observes exactly $Z$ PIR queries over the sub-block database.

\textbf{Round 3 (Dense Embedding Fetch).}
The client fetches exactly $k_s\cdot s$ document embeddings via PIR over the
database of $N$ documents. The count $k_s \cdot s$ is a fixed public parameter.

\textbf{Sequential Composition.}
The total number of PIR queries across all three rounds is
$T = \mathcal H \cdot W \cdot r + Z + k_s \cdot s$, which is a deterministic polynomial
function of the public parameters.
By Lemma~\ref{lem:composition} applied to this $T$-query sequence, the server's
view across all rounds is computationally indistinguishable from a simulation
issuing $T$ uniformly random PIR queries, completing the proof.
\end{proof}

\begin{remark}[Access-Pattern Privacy]
\label{rem:access}
A critical property enabling Theorems~\ref{thm:bin1},~\ref{thm:bin2},
and~\ref{thm:bm25tree} is that the \emph{number} of PIR queries per round is a
deterministic function of public parameters $(N, B, s, r, W, k_s, q^\star,\beta,\beta')$,
independent of the query content $Q$.
This eliminates access-pattern leakage: the server cannot distinguish any two
queries of equal length $|Q|=q^\star$.
By contrast, adaptive protocols such as PACMANN must pad short-circuit
executions with dummy queries to hide early convergence, introducing both
efficiency overhead and a potential timing side-channel if padding is imperfect.
\end{remark}

\begin{remark}[Correctness of Algorithms]
We follow the common practice of finding approximate nearest neighbor of modern framework like HNSW ~\citep{malkov2018hnsw} or FAISS~\citep{johnson2019faiss}: The retrieval correctness are not formally analyzed but rather showed by empirically evaluate it using benchmark dataset ~\citep{benchmark_ann}. Thus, we show the correctness experimentally in Section ~\ref{sec:evaluation} and omit the proof on theoretical correctness gap between \bin, \tree and the ideal top-$k$ nearest neighbors search.
\end{remark}

\section{PIR \tree Database via Cuckoo Hashing}
\label{app:cuckoo}

\textbf{Notation.} With the term $\mathsf{Tree}(\corpus)$ we denote the tree build upon corpus $\corpus$ while with the term $\mathcal{L}_\ell$ we denote the collection of superblocks of layer $\ell$ of the tree. 
We use $H=O(\log_r |\corpus|/|\mathcal{B}|)$ to denote the height of the tree. 
For ease of exposition, we may abuse the notation $\mathcal{B}$ for both the block and the super block in the tree.  
Let $\textsf{id}(\mathcal{B}_{\ell})$ denote the index of block $\mathcal{B}_{\ell}$ within level $\ell$. The client can compute this value locally, i.e., to fetch, say, the 4th node of level $\ell$, it simply sets $\textsf{id}(\mathcal{B}_{\ell}) = 4$.

\textbf{Storage Blowup.}
Recall the initial proposal from the main paper in which at each level $\ell$, every super block $\mathcal{B} \in \mathcal{L}_\ell$ is associated with an array of $|\mathcal{V}|$ integers storing the upper bounds.
Such an array is extremely sparse (most of its entries are $0$) so this representation wastes a substantial amount of storage.
Crucially, the cost is not confined to space, i.e., PIR performance scales with the size of the PIR database, which here is $O(|\mathcal{L}_\ell| \cdot |\mathcal{V}|)$ for level $\ell$, so the same redundancy inflates query time as well.
Summing over all levels, representing each level as a concatenation of $|\mathcal{L}_\ell|$ sparse arrays forces the server to store $O\!\left(\sum_{\ell} |\mathcal{L}_\ell| \cdot |\mathcal{V}|\right)$ integers, the vast majority of which are $0$.

\textbf{Proposed Optimization.} In order to reduce the PIR database storage, for each level $\ell$ on the tree, we present its corresponding PIR database as a \emph{dictionary} (specifically, a hash map)  of \textsf{key}-\textsf{value} pairs where the key is the string $\textsf{key}=w||\textsf{id}(\mathcal{B}_{\ell})$ and the value is $\textsf{value}=\sigma_w(\mathcal B_\ell)||w||\textsf{id}(\mathcal{B}_{\ell})$. 
The dictionary stores an entry only for those blocks whose score is non-zero, i.e., $\sigma_w(\mathcal{B}_\ell) > 0$, which is precisely what eliminates the sparsity of the previous representation.
This, however, means the client may issue a query on a combination of $w$ and $\textsf{id}(\mathcal{B}_{\ell}))$ that has no entry in the dictionary.
Because our dictionary always returns \emph{some} entry (it cannot signal that a key is absent) the client needs a way to tell a genuine answer from an unrelated entry it happened to collide with.
Repeating $w \,\|\, \textsf{id}(\mathcal{B}_{\ell})$ inside $\textsf{value}$ supplies exactly this check; the client compares the returned suffix against the key it queried, and interprets a mismatch as $\sigma_w(\mathcal{B}_\ell) = 0$. 
Concretely, on level $\ell$ of $\textsf{Tree}(\corpus)$, let $N_\ell$ be the number of dictionary entries such that $w\in\mathcal V, \mathcal B_\ell$ on level $\ell$ of the tree, and $\sigma_w(B_\ell) > 0$. 
Thus, we have $N_\ell \ll |\mathcal L_\ell|\cdot |\mathcal V|$. 
The hash map is implemented via a \emph{Cuckoo Hash Table}, with load factor $\alpha$, which has $O(1)$ search time. 
We use $m$ hash functions $h_1,\dots, h_m: \mathcal V\times \mathcal{L}_\ell \to [N_\ell/\alpha]$ to map dictionary entries to a table of $N_\ell/\alpha$ slots such that each slot contains at most one \textsf{key}-\textsf{value} pair, and the entry $\left(w||\textsf{id},\sigma_w(\mathcal B_\ell)||w||\textsf{id}(\mathcal{B}_{\ell})\right)$ is mapped to slot $h_i(w||\textsf{id})$ for one $i\in \{1,\dots,m\}$. 
For an empty row in Cuckoo table, the data is $(\bot, 0)$. 
At query time, suppose the client wishes to learn the score of a block $\mathcal{B}_c \in \mathcal{L}_\ell$ with respect to one of its own keywords $w_c$.
It forms the key $\textsf{key}_c = w_c \,\|\, \textsf{id}(\mathcal{B}_c)$, issues $m$ PIR queries for the candidate slots $h_1(\textsf{key}_c), \dots, h_m(\textsf{key}_c)$, and receives $m$ values.
Applying the check described above, the client compares each returned suffix against $\textsf{key}_c$ to verify if this word appears in the corresponding superblock.

\textbf{Complexity.} On layer $\ell$ of the tree, the new representation incurs $m\cdot w \cdot r\cdot |Q|$ queries on a database of size $N_\ell/\alpha \cdot 3$ ($N_\ell/\alpha$ rows, three fields per row). On comparison, the old, unoptimized version has $w\cdot r \cdot |Q|$ queries on a database of size $|\mathcal V|\cdot |\mathcal L_\ell|$ (i.e., $|\mathcal V| \cdot|\mathcal L_\ell|$ rows with one field per row). In total over $H$ levels, the storage size of the new representation is $\sum_{\ell=1}^{H} N_\ell/\alpha$ comparing to the old cost of $O(|\mathcal{V}|\cdot\frac{|\corpus|}{|\mathcal{B}|})$, where $\sum_{\ell=1}^{H} N_\ell \ll \sum_{\ell=1}^{H} |\mathcal L_\ell|\cdot|\mathcal V| = O(|\mathcal{V}|\cdot\frac{|\corpus|}{|\mathcal{B}|})$.

\section{Supplementary details}

\subsection{Full Testbed.}
\label{app:testbed}

We detail the extra details regarding our testbed that would be too verbose for the main body:
We also test LAN latency in Appendix~\ref{app:experiments}, which has 1000 megabits per second bandwidth and 5 millisecond round-trip-time.
Every configuration was given $120$\,GB of RAM, with the exception of $\bin$ configurations that: hold the embedding database in memory and have more than $1.5$K documents per row, those were run on a machine with $1$\,TB of RAM.
For SciFact's document/query embeddings we used $192$-dimension embeddings created from the \texttt{e5-base-v2}~\citep{wang2024textembeddingsweaklysupervisedcontrastive} embedding function. 
For MS MARCO's document/query embeddings  we used $192$-dimension embeddings created from the \texttt{msmarco-MiniLM-L-6-v3} (The same as PACMANN did, to keep the comparison fair), Sentence-BERT bi-encoder~\citep{reimers-gurevych-2019-sentence}, built on MiniLM~\citep{wang2020minilmdeepselfattentiondistillation} and fine-tuned on MS MARCO.
We ran our LLM/AI experiments on a single NVIDIA A100 (80GB VRAM) and fixed the RAGAS evaluation stage to the same $100$ queries per dataset.

\subsection{Hybrid Retrieval in Practice.}
\label{app:hybrid_retrieval}
Combining lexical and dense retrieval is a standard, first-class feature of the search engines on which RAG pipelines are built.
Elasticsearch exposes a reciprocal rank fusion (RRF) retriever~\citep{2009cormack} that merges the rankings of a BM25 query and a $k$-NN query into a single result list~\citep{elastic_rrf}.
OpenSearch introduced a dedicated hybrid query in version~2.11 that normalizes and combines keyword and neural scores inside the engine~\citep{opensearch_hybrid}.
Weaviate runs vector and BM25 search in parallel and fuses the two result sets with a configurable weighting~\citep{weaviate_hybrid}.
Azure AI Search likewise executes keyword and vector queries in parallel and merges them with RRF, and its documentation reports that, in most benchmark tests, hybrid queries with semantic reranking return the most relevant results~\citep{azure_hybrid}.
A Microsoft whitepaper, drawing on production RAG applications serving billions of queries per day, describes hybrid search with reranking as ``table stakes’' for RAG~\citep{azure_whitepaper}.
Model providers make the same recommendation: Anthropic’s Contextual Retrieval pipeline pairs embeddings with BM25, and adding the BM25 component reduced the top-20 retrieval failure rate from $3.7\%$ to $2.9\%$~\citep{anthropic2024contextual}.
These sources establish that hybrid retrieval is broadly supported and recommended in practice; they do not constitute a census of deployed systems, and many simple RAG pipelines remain dense-only.

\end{document}